% last updated in April 2002 by Antje Endemann
% Based on CVPR 07 and LNCS, with modifications by DAF, AZ and elle, 2008 and AA, 2010, and CC, 2011; TT, 2014; AAS, 2016

\documentclass[runningheads]{llncs}
\usepackage{graphicx}
\usepackage{amsmath,amssymb} % define this before the line numbering.

\usepackage{color}
\usepackage{epsfig}
\usepackage{graphicx}
\usepackage{amsmath}
\usepackage{amssymb}
\usepackage{booktabs}
\usepackage{multirow}
\usepackage{tabularx}
\usepackage{booktabs}
\usepackage{makecell}
\usepackage{url}
\usepackage[colorlinks,linkcolor=blue]{hyperref}
\usepackage{comment}
\usepackage{subcaption}
\usepackage{ulem}
\usepackage[ruled,vlined]{algorithm2e}

\usepackage[width=122mm,left=12mm,paperwidth=146mm,height=193mm,top=12mm,paperheight=217mm]{geometry}
\begin{document}
% \renewcommand\thelinenumber{\color[rgb]{0.2,0.5,0.8}\normalfont\sffamily\scriptsize\arabic{linenumber}\color[rgb]{0,0,0}}
% \renewcommand\makeLineNumber {\hss\thelinenumber\ \hspace{6mm} \rlap{\hskip\textwidth\ \hspace{6.5mm}\thelinenumber}}
% \linenumbers
\pagestyle{headings}
\mainmatter
\def\ECCVSubNumber{2798}  % Insert your submission number here
\newcommand{\cy}[1]{\textcolor{red}{#1}}
\newcommand{\hl}[1]{\textcolor{blue}{#1}}
\title{FakeCLR: Exploring Contrastive Learning for Solving Latent Discontinuity in Data-Efficient GANs} % Replace with your title
% INITIAL SUBMISSION 
\author{Ziqiang Li\inst{1}\orcidID{0000-0001-9484-2310}
\thanks{This work was performed when Ziqiang Li was visiting JD Explore Academy as a research intern.}
\and
Chaoyue Wang\inst{2} \and
Heliang Zheng\inst{2}\and
Jing Zhang\inst{3}\and
Bin Li\inst{1}
}
%
%\authorrunning{F. Author et al.}
% First names are abbreviated in the running head.
% If there are more than two authors, 'et al.' is used.
%
\institute{University of Science and Technology of China, China \and
JD Explore Academy, China \and
The University of Sydney, Australia\\
\email{iceli@mail.ustc.edu.cn}, \email{chaoyue.wang@outlook.com}, \\
\email{zhenghl@mail.ustc.edu.cn}, \email{jing.zhang1@sydney.edu.au}, \email{binli@ustc.edu.cn}}

\titlerunning{FakeCLR for Data-Efficient GANs} 
%\authorrunning{ECCV-22 submission ID \ECCVSubNumber} 
%\author{Anonymous ECCV submission}
%\institute{Paper ID \ECCVSubNumber}
\maketitle

\begin{abstract}

Data-Efficient GANs (DE-GANs), which aim to learn generative models with a limited amount of training data, encounter several challenges for generating high-quality samples. Since data augmentation strategies have largely alleviated the training instability, how to further improve the generative performance of DE-GANs becomes a hotspot. Recently, contrastive learning has shown the great potential of increasing the synthesis quality of DE-GANs, yet related principles are not well explored. In this paper, we revisit and compare different contrastive learning strategies in DE-GANs, and identify (i) the current bottleneck of generative performance is the discontinuity of latent space; (ii) compared to other contrastive learning strategies, Instance-perturbation works towards latent space continuity, which brings the major improvement to DE-GANs. Based on these observations, we propose FakeCLR, which only applies contrastive learning on perturbed fake samples, and devises three related training techniques: Noise-related Latent Augmentation, Diversity-aware Queue, and Forgetting Factor of Queue. Our experimental results manifest the new state of the arts on both few-shot generation and limited-data generation. On multiple datasets, FakeCLR acquires more than $15\%$ FID improvement compared to existing DE-GANs. Code is available at \href{https://github.com/iceli1007/FakeCLR}{https://github.com/iceli1007/FakeCLR}.

\keywords{Data-Efficient Training, GANs, Contrastive Learning}
\end{abstract}
\section{Introduction}
Generative Adversarial Networks~\cite{goodfellow2014generative} are capable of generating realistic images~\cite{karras2020analyzing} indistinguishable from real ones, which acquire remarkable achievements in image processing~\cite{wang2018perceptual,wenlong2021ranksrgan,Yang_2022_CVPR,qiu2019world} and computer vision~\cite{yang2018pose,tao2019resattr,jiang2020tsit,qiu2021scene}. The success of GANs relies on large-scale training datasets, for example, CelebA consists of around 200K images, and LSUN has more than 1M images. However, many scenarios may have a limited amount of training data, \textit{e.g.} there are just 10 examples per artist in the Artistic-Faces dataset~\cite{yaniv2019face}. 
%in some domains like medical images of rare symptoms and abstract art paintings, we can only train GANs on limited data, few-shot, or even one-shot data (e.g., there are just 10 examples per artist in the Artistic-Faces dataset~\cite{yaniv2019face}). 
Therefore, Data-Efficient Generative Adversarial Networks (DE-GANs \cite{li2022comprehensive}) are important to many real-world practices and attract more and more attention. More analyses can be found in the survey \cite{li2022comprehensive}.

\begin{figure*}[t!]
%%\vspace{-1em}
\setlength{\abovecaptionskip}{0.1cm}
    \setlength{\belowcaptionskip}{-0.3cm}
	\centering
	\includegraphics[scale=0.45]{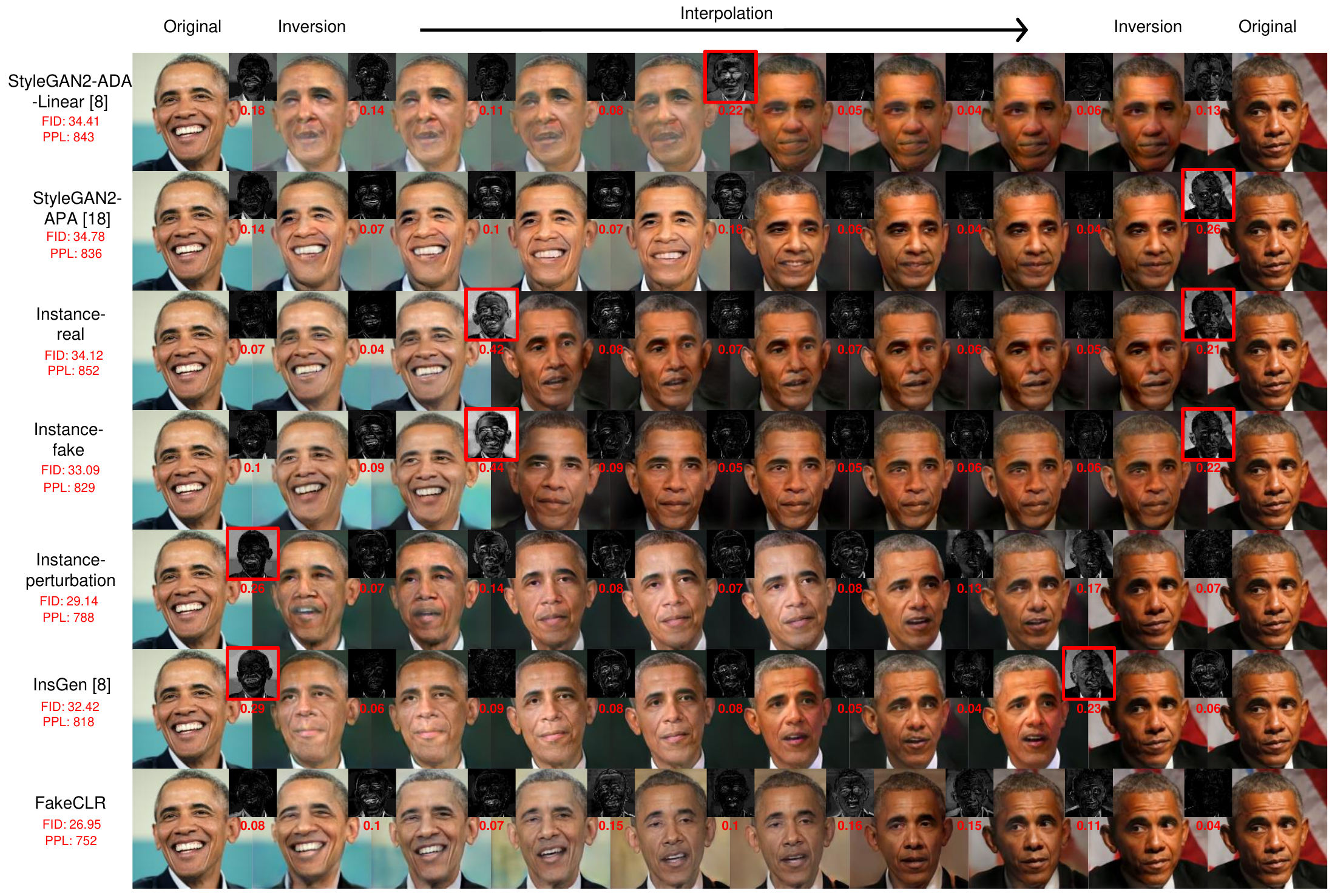}
	\caption{{Training GANs with as little as 100 training samples (Obama dataset) typically results in severe discontinuity in latent space, \textit{i.e.} under-diversity interpolation on the top four rows. StyleGAN2-ADA-Linear \cite{yang2021data} is the baseline model with data augmentation. Generator trained with Instance-perturbation and ours show more accurate inversion, smoother latent space, more diverse interpolation, and better FID and PPL. The small grey images visualize the difference between the two face images. The red numbers are mean of pixel-wise difference, we highlighted the difference score$>0.2$ with \cy{ red} border.}}
	\label{FIG:motivation}
	%%\vspace{-1em}
\end{figure*}

Previous studies~\cite{zhao2020differentiable,karras2020training} demonstrate that severe discriminator overfitting leads to the training failure of DE-GANs. The less training data there is, the earlier the discriminator overfitting happens. Specifically, the discriminator is overly confiding in a few real samples and shows abnormal decision boundaries. Then, the gradients passing to generator are inaccurate, and the training instability problems occur \cite{arjovsky2017towards}. To mitigate it, many technologies~\cite{yang2021one,sauer2021projected,kumari2021ensembling,chen2021data,tseng2021regularizing} have been proposed. Among them, data augmentation strategies~\cite{zhao2020differentiable,karras2020training,tran2021data,jiang2021deceive} largely alleviate the training instability issue.

%, such as data augmentation~\cite{zhao2020differentiable,karras2020training,tran2021data,jiang2021deceive}, instance discrimination~\cite{yang2021data}, pre-training~\cite{yang2021one,sauer2021projected,kumari2021ensembling}, compression~\cite{chen2021data}, and regularization~\cite{tseng2021regularizing}. %Though discriminator overfitting still exists in some situations, the training instability issue of DE-GANs is largely avoided. 

%\cy{Among them, InsGen~\cite{yang2021data} introduces instance discrimination task through contrastive learning acquiring state-of-the-art performance in DE-GANs.} 

Currently, how to improve the diversity and quality of DE-GANs generated images becomes the major objective of the community. Contrastive learning (CL) \cite{he2020momentum,chen2020simple}, as a representative self-supervised learning algorithm, has been applied to different components of GANs, \textit{e.g.} real or fake samples. Although existing methods demonstrate promising performance on DE-GANs, the motivation and principle behind different CL strategies are not well explored and justified. In this paper, we revisit the principle of different contrastive learning strategies and explore a novel perspective of understanding the connections between contrastive learning and DE-GANs' training.

In particular, we experimentally analyze the results of thee different contrastive learning strategies {(Instance-real: Eq (\ref{eq:instance-real}), Instance-fake: Eq (\ref{eq:instance-fake}), and Instance-perturbation: Eq (\ref{eq:instance-perturbation}))} on DE-GANs. Among them, Instance-perturbation contributes most and achieves significant gains in terms of FID, while others bring slight or even negative improvement (Table \ref{Tab:Insgen_ablation_1}). To dig out how Instance-perturbation works, we analyze the latent space by GAN inversion and interpolation. As shown in Fig.~\ref{FIG:motivation} (top four rows), the training data of DE-GANs is sparse and discrete, and the generator tends to remember these samples, which leads to the discontinuity of the latent space. It can be observed that Instance-perturbation (the fifth row) can alleviate this problem, which achieves better inversion results and more continuous latent interpolation. Such results indicate that current DE-GANs, \textit{e.g.} the most representative method StyleGAN2-ADA, are troubled by the discontinuities on latent space, and reasonable contrastive learning methods may help to solve this problem. 
Based on these findings, we propose a novel contrastive learning method termed FakeCLR for DE-GANs, which only applies Instance-perturbation and additional three exquisite strategies %to improve the learning effectiveness. 
inspired by continuously changing fake images. These three strategies are Noise-related Latent Augmentation, Diversity-aware Queue, and Forgetting Factor of Queue. Finally, the proposed FakeCLR achieves state-of-the-art performance and contributes to more continuous latent space. Overall, our contributions include:
\begin{itemize}
	%\item \textit{Exploring a potential bottleneck of DE-GANs.} We explore and emphasize a potential bottleneck, namely generator overfitting, of training GAN models with limited data.
	%\item \textit{A novel insight about contrastive learning applied in DE-GANs' training.} In contrast to previous works that consider contrastive learning as a method to alleviate discriminator overfitting, we argue that contrastive learning in DE-GANs actually contributes to smoothing latent space and mitigates generator overfitting.
	\item \textit{Illustrating the major bottleneck of DE-GANs.} We explore and emphasize the bottleneck, discontinuous of the latent space, of existing DE-GAN methods.
	\item \textit{Exploring the connections between contrastive learning and DE-GANs.} Through comprehensive experiments, we revisit three popular contrastive learning strategies that can be applied to DE-GANs, and identify that only Instance-perturbation brings the major improvement on generative performance.
	%In contrast to previous works that consider contrastive learning as a method to alleviate discriminator overfitting, we argue that contrastive learning in DE-GANs actually contributes to smoothing latent space and mitigates generator overfitting.
	\item \textit{A new contrastive learning method for DE-GANs.} We propose new contrastive learning with noise perturbation on fake images for DE-GANs called FakeCLR. Meanwhile, three technical innovations have been proposed for improving generative performance.
	\item \textit{Comprehensive experimental validations on various training settings.} Besides data augmentation, the proposed FakeCLR achieves significant improvements on both limited-data generation and few-shot generation tasks.

\end{itemize}

\section{Background}
\subsection{Data-Efficient Generative Adversarial Networks}
Generative Adversarial Networks (GANs) \cite{goodfellow2014generative} act a two-player adversarial game, where the generator $G(z)$ is a distribution mapping function that transforms low-dimensional latent distribution $p_z$ to target distribution $p_g$. And the discriminator $D(x)$ evaluates the divergence between the generated distribution $p_g$ and real distribution $p_r$. The generator and discriminator minimize and maximize the 
adversarial loss function.
This min-max game can be expressed as:
\begin{equation}
	\begin{aligned}
		\min _{\phi} \max _{\theta} f(\phi, \theta) =\mathbb{E}_{\mathbf{x} \sim p_{r}}\left[g_{1}\left(D_{\theta}(\mathbf{x})\right)\right] 
		+\mathbb{E}_{\mathbf{z} \sim p_{z}}\left[g_{2}\left(D_{\theta}\left(G_{\phi}(\mathbf{z})\right)\right)\right] ,
	\end{aligned}
\end{equation}
%For instance, vanilla GAN \cite{goodfellow2014generative} can be described as $g_1(t)=g_2(-t)=-\log(1+e^{-t})$; $f$-GAN \cite{nowozin2016f} can be written as $g_1(t)=-e^{-t}, g_2(t)=1-t$; Morever, Geometric GAN \cite{lim2017geometric} and WGAN \cite{arjovsky2017wasserstein} are described as $g_1(t)=g_2(-t)=-\mathop{\max}(0,1-t)$ and $g_1(t)=g_2(-t)=t$, respectively.
where $\phi$ and $\theta$ are parameters of the generator $G$ and discriminator $D$, respectively. $g_1$ and $g_2$ are different functions for various GANs. Data-Efficient GANs is a special case of GANs, which targets at obtaining a generator that can fit the real data distribution with limited, few-shot training samples. Formally, given the limited data distribution $p_l$ (Generally, samples of $p_l$ is the subset of $p_r$), we expect the generated distribution $p_g$ obtained by training with $p_l$ to be as close as possible to $p_r$. Concretely, loss functions of each network are formalized as:
\begin{equation}
	\begin{aligned}
		&\mathcal{L}_{D}=-\mathbb{E}_{\mathbf{x} \sim p_{l}}\left[g_{1}\left(D_{\theta}(\mathbf{x})\right)\right] -\mathbb{E}_{\mathbf{z} \sim p_{z}}\left[g_{2}\left(D_{\theta}\left(G_{\phi}(\mathbf{z})\right)\right)\right], \\
		&\mathcal{L}_{G}=\mathbb{E}_{\mathbf{z} \sim p_{z}}\left[g_{2}\left(D_{\theta}\left(G_{\phi}(\mathbf{z})\right)\right)\right] .
	\end{aligned}
\end{equation}

%Although the quality of generated images by GANs has significant advancement, GANs training requires a large amount of data. 
%DE-GANs' training is challenging because the data absence causes many problems. 
Previous studies attribute the degradation of DE-GANs to the overfitting of the discriminator. Many data augmentation, regularization, architectures, and pre-training techniques \cite{li2022comprehensive} have been proposed to mitigate this issue. %which can be divided into two parts according to whether using pre-training or not. 
%\textbf{Without Pre-training.}
(i) Data Augmentation \cite{zhao2020differentiable,zhao2020image,karras2020training,8627945,tran2020towards} is a striking method for mitigating overfitting of the discriminator and orthogonal to other ongoing researches on training, architecture, and regularization. Popular augmentation strategies, such as Adaptive Data Augmentation (ADA), are employed as the essential complements in most DE-GANs. (ii) Regularization \cite{li2020systematic,kong2021smoothing,yang2021data} is also a kind of popular technology for enhancing generalization in discriminator by introducing priors or extra supervision tasks. For instance, Tseng \textit{et al.} \cite{tseng2021regularizing} proposed an anchors-based regularization term to slow down the training of discriminator under limited data. %Kong \textit{et al.} \cite{kong2021smoothing} illustrated that DE-GANs display undesirable properties like ”stairlike” latent space where transitions in latent space suffer from discontinuity, occasionally yielding abrupt changes in output. They think that this stairlike latent space is the key point of mode collapse. Hence, they proposed a novelty regularization to smooth latent space. 
%Kong \textit{et al.} \cite{kong2021smoothing} proposed a novelty regularization to smooth latent space for alleviating ``stairlike'' properties of latent space. 
%Yang \textit{et al.} \cite{yang2021data} proposed a data-efficient Instance Generation (InsGen) method, which achieved state-of-the-art results on various datasets. 
(iii) Architecture is also the key point to improve the performance and stability of GANs. Some light-weight networks \cite{liu2020towards} have been employed in DE-GANs. Additionally, some model compression methods \cite{zhang2021efficient} have also been introduced in DE-GANs \cite{chen2021data}. %The searched sparse sub-networks achieve better performance than original dense networks. 
(iv) Pre-training~\cite{noguchi2019image,wang2020minegan,wang2018transferring,mo2020freeze,yang2021one,sauer2021projected,kumari2021ensembling} provides another solution to decrease the demand for data. Some studies argued that generator pre-training introduces some image-diversity~\cite{grigoryev2022when} and latent-smooth~\cite{ojha2021few} priors, and discriminator pre-training~\cite{grigoryev2022when} provides accurate gradient for DE-GANs. %Different from the discriminator overfitting, we explore and emphasize another bottleneck, discontinuous of the generated distribution, in DE-GANs.

%Pre-training is another method to decrease the demand for data. Some studies argued that generator pre-training introduces some image-diversity~\cite{grigoryev2022when} and latent-smooth~\cite{ojha2021few} priors, and discriminator pre-training~\cite{grigoryev2022when} provides accurate gradient for DE-GANs. Therefore, a line of studies ~\cite{noguchi2019image,wang2020minegan,wang2018transferring,mo2020freeze,yang2021one,sauer2021projected,kumari2021ensembling} investigated different methods to transfer GANs to novel limited datasets. For instance, \cite{yang2021one} imported two lightweight modules before pre-trained generator and after pre-trained discriminator yet freeze original parameters of GANs. \cite{sauer2021projected,kumari2021ensembling} adopted an image classification models trained on ImageNet as feature projection to rebuild the discriminator. Although generated images in \cite{sauer2021projected,kumari2021ensembling} have a better FID metric, the visual quality has decreased instead.  We speculate that this is due to the ImageNet-classification based pre-training may result in the FID metric leaking into the training process. 

According to the scale of training data, data-efficient generation can be roughly divided into two tasks: few-shot generation and limited-data generation. Few-shot generation is a challenging task due to the employed tiny amount of data, for instance, 10-shot and 100-shot. %Therefore, most DE-GANs proposed to few-shot generation improve the performance through introducing knowledge from large-scale datasets (pre-training). 
Compared to few-shot generation, limited-data generation
employs more data, such as 1K or 2K. Generally, %, data augmentation and regularization are sufficient for stability training in the limited-data generation. Accordingly, 
the employed technologies, apart from data augmentation, are notably different between few-shot and limited-data generation. Our proposed FakeCLR is a universal technology achieving remarkable performance on both tasks.

\subsection{Contrastive Learning}
Contrastive learning \cite{oord2018representation,chen2020simple,he2020momentum} is a kind of self-supervised method to extract representations. %by \cy{pushing positive pairs and pulling negative pairs.}
Generally, given an image $\mathbf{x}$, two random views (query: $\mathbf{x}_{q}$ and key: $\mathbf{x}^+_{q}$) are created by different data augmentations ($T_1$ and $T_2$), defined as:
\begin{equation}
	\mathbf{x}_{q}=T_1(x), \quad \mathbf{x}^+_{q}=T_2(x) .
\end{equation}
Positive pair is defined as such query-key pair, between $\mathbf{x}_{q}$ and $\mathbf{x}^+_{q}$, from the same image. Negative pairs are defined as pairs from different images, \textit{i.e.} between $\mathbf{x}_{q}$ and $
\left\{\mathbf{x}^-_{k_{i}}\right\}_{i=1}^{N}$, where $\mathbf{x}^-_{k_{i}}=T_2(\mathbf{x}_i)$ and $\mathbf{x}\neq\mathbf{x}_i$, $i=1,\cdots,N$. All views are passed through the feature extractor $F(\cdot)$ to acquire representation $\mathbf{v}$:
\begin{equation}
	\begin{gathered}
		\mathbf{v}_{q}=F\left(\mathbf{x}_{q}\right), \quad \mathbf{v}^+_{q}=F\left(\mathbf{x}^+_{q}\right), \quad \mathbf{v}^-_{k_{i}}=F\left(\mathbf{x}^-_{k_{i}}\right), \quad i=1 \ldots N. \\
	\end{gathered}
\end{equation}
Contrastive learning aims to maximize the similarity of positive pairs and makes negative pairs dissimilar. Therefore, the InfoNCE loss \cite{oord2018representation} is designed as:

\begin{equation}
\small
	\begin{split}
		\mathcal{C}_{F(\cdot), \phi(\cdot)}  & \left(\mathbf{x}_{q}, \mathbf{x}^+_{q}, \left\{\mathbf{x}^-_{k_{i}}\right\}_{i=1}^{N} \right)=\\
		 &-\log\frac{\exp \left(\phi\left(\mathbf{v}_{q}\right)^{T} \phi\left(\mathbf{v}^+_{q}\right) / \tau\right)}{\exp \left(\phi\left(\mathbf{v}_{q}\right)^{T} \phi\left(\mathbf{v}^+_{q}\right) / \tau\right)+\sum_{i=1}^{N} \exp \left(\phi\left(\mathbf{v}_{q}\right)^{T} \phi(\mathbf{v}^-_{k_{i}}) / \tau\right)},
	\end{split}
\end{equation}
where $\phi(\cdot)$ is the head network to project the representation to different spaces, and $\tau$ is the temperature coefficient. %Generally speaking, contrastive learning contains three paradigms according to the different forms of negative samples: SimCLR \cite{chen2020simple}, MoCo \cite{he2020momentum}, and SimSiam~\cite{chen2021exploring}. SimCLR adopts large batch size to acquire many negative samples; MoCo uses a negative queue rather than large batch size to acquire a sufficient number of negative samples; SimSiam eliminates the negative sample pairs, which simplifies the contrastive learning. %, but cannot achieve the state-of-the-art performance in ImageNet linear evaluation. 
%Furthermore, other studies \cite{tian2020makes,chuang2020debiased,xiao2020should} also analyze different components of contrastive learning in discriminative tasks. 
Generally, contrastive learning strategies are employed in discriminative models and have achieved convincing performance on representation leraning~\cite{chen2020simple,he2020momentum,chen2021exploring,xiao2020should}. 
There are still a few works~\cite{jeong2021training,yang2021data} to apply contrastive learning into unconditional generative tasks. Specifically, Jeong \textit{et al.} \cite{jeong2021training} proposed Contrastive Discriminator (ContraD), a way of training discriminators of GANs using improved SimCLR \cite{chen2020simple}. Yang \textit{et al.} \cite{yang2021data} proposed InsGen, a instance discrimination realized by MoCo-v2~\cite{chen2020improved}, achieving the state-of-the-art performance on limited-data generation. %Moreover, such large batch size setting in ContraD is unacceptable for large resolution generation tasks and InsGen does not seem to have fully explored the potential of contrastive learning on DE-GANs. 
Different from them, our FakeCLR only applies contrastive learning to the fake images and adopts three advanced strategies designed specifically for generative tasks, acquiring remarkable performance on both few-shot and limited-data generation.

\section{How Contrastive Learning Benefits DE-GANs?}
Previous works tend to attribute the success of contrastive learning to extra discrimination tasks. For example, in \cite{yang2021data}, contrastive learning is considered as an instance discrimination task applying to both real and fake images by two additional heads $\phi^r(\cdot)$ and $\phi^f(\cdot)$ besides the original bi-classification head $\phi^{d}(\cdot)$. %, where $d(\cdot)$ indicates the backbone. Moreover, a noise perturbation strategy \cite{cheung2020modals} was introduced as a latent space augmentation method to further improve the fake image discrimination ability. 
In this section, we explore the question of \textit{How contrastive learning benefits Data-Efficient GANs?} by %we briefly review the employed contrastive learning in InsGen. Then, 
conducting comprehensive comparisons among different contrastive learning strategies.%, \textit{i.e.} real/fake and with/without noise perturbation. 

\subsection{Comparisons and Analyses on Contrastive Learning Strategies} 
%\cy{1) According to above dissuasion, we aim to explore the effects of four contrastive learning strategies, they are xxxx. 2) We set styleGAN2-ADA, LDA, InsGen as comparison baselines. 3) Different from ablation studies in InsGen, xxxxx.}

Following existing study \cite{yang2021data}, three main contrastive learning strategies can be applied in Data-Efficient GANs, \textit{i.e.} Instance-real, Instance-fake, and Instance-perturbation. Specifically, \textit{Instance-real} introduces an independent real head $\phi^r(\cdot)$ to conduct additional instance discrimination on real images, \textit{i.e.,}% The objective is formalized as:
\begin{equation}
	\begin{gathered}
		\mathcal{C}^r=\mathcal{C}_{d(\cdot), \phi^r(\cdot)}\left(T_q(\mathbf{x}_{q}), T_{k^+}(\mathbf{x}_{q}),\left\{T_{k_i}(\mathbf{x}_{k_{i}})\right\}_{i=1}^{N}\right),
	\end{gathered}
	\label{eq:instance-real}
\end{equation}
where $T_q$, $T_{k^+}$, and $T_{k_i}$ are different augmentations, $\mathbf{x}_q$ and $\mathbf{x}_{k_i}$ are sampled from the real distribution. Similarly, \textit{Instance-fake} introduces an independent fake head $\phi^f(\cdot)$ to conduct instance discrimination on fake images through contrastive learning and whose objective is formalized as:
\begin{equation}
	\begin{gathered}
		\mathcal{C}^f=\mathcal{C}_{d(\cdot), \phi^f(\cdot)}\left(T_q(G(\mathbf{z}_{q})), T_{k^+}(G(\mathbf{z}_{q})),\left\{T_{k_i}(G(\mathbf{z}_{k_{i}}))\right\}_{i=1}^{N}\right),
	\end{gathered}
	\label{eq:instance-fake}
\end{equation}
where $\mathbf{z}_q$ and $\mathbf{z}_{k_i}$ are sampled from the latent distribution. Furthermore, \textit{Instance-perturbation} introduces a noise perturbation strategy \cite{cheung2020modals} to further improve the power of Instance-fake. The objective is formalized as:
\begin{equation}
	\begin{gathered}
		\mathcal{C}_p^f=\mathcal{C}_{d(\cdot), \phi^f(\cdot)}\left(T_q(G(\mathbf{z}_{q})), T_{k^+}(G(\mathbf{z}_{q}+\epsilon_q)),\left\{T_{k_i}(G(\mathbf{z}_{k_{i}}))\right\}_{i=1}^{N}\right),
	\end{gathered}
	\label{eq:instance-perturbation}
\end{equation}
where $\epsilon_q$ stands for the perturbation of latent augmentation, which is sampled from a Gaussian distribution with small variance.

In the following, we adopt StyleGAN2-ADA \cite{karras2020training} and InsGen \cite{yang2021data} as reference and conduct ablation studies\footnote{ %InsGen \cite{yang2021data} contains some ablation study of different strategies in contrastive learning, some confusion needs to be further studied. 
In InsGen \cite{yang2021data}, ADA-Linear rather than ADA is used as augmentation $T$  when the number of images in the dataset is less than 5K. %Table \ref{Tab:Insgen_ablation_1} presents the collection of various experiments in the ablation study and illustrates that LDA consistently achieves better performance than ADA on datasets with less than 5K images. 
For fairness, ADA-Linear, rather than ADA and InsGen, should be regraded as the baseline for comparison.} % When choosing ADA-Linear as baseline, 
%and conduct experiments on different data settings, some conclusions different from \cite{yang2021data} are obtained. %in \textit{"How important is the instance discrimination?"}
to explore \textit{"How Contrastive Learning Benefits Data-Efficient GANs?"}. Related results are reported in Table \ref{Tab:Insgen_ablation_1} and Fig. \ref{FIG:ablation of Insgen}.

\noindent\textbf{Instance-real causes performance drop on small datasets.} %\cy{When choosing LDA as baseline and do experiments on different queue size and data settings, some new conclusions different from \cite{yang2021data} are concluded in the importance of instance discriminator.} 
Although on large dataset such as FFHQ-5K, Instance-real\footnote{The objectives of discriminator are: i) Instance-real: $\mathcal{L}^{'}_{D}=\mathcal{L}_{D}+\lambda^r\mathcal{C}^r$;
ii) Instance-fake:$\mathcal{L}^{'}_{D}=\mathcal{L}_{D}+\lambda^f\mathcal{C}^f$;
iii) Instance-perturbation: $\mathcal{L}^{'}_{D}=\mathcal{L}_{D}+\lambda^f\mathcal{C}^f_p$, Respectively.\label{defination}} is effective, it would cause performance drop on smaller datasets. As shown in Fig. \ref{FIG:ablation of Insgen}, the performance with different queue size is consistently decreased on small datasets, \textit{e.g.} FFHQ-100, FFHQ-1K, and FFHQ-2K. Such a degradation is much severer when the queue size is 100 or 300 on FFHQ-100 dataset. 

\begin{table*}[t!]
	\caption{The whole ablation experiments of different contrastive learning strategies. We demonstrate the best FID metric for different queue sizes on various datasets. Results with $^*$ are best on four different queue sizes, which are better than those reported in \cite{yang2021data}.
		\label{Tab:Insgen_ablation_1}}
	\centering
	\begin{tabular}{ccccc}
		\toprule
		256$\times$256 Resolution$\quad$&$\quad$FFHQ-100$\quad$&$\quad$ FFHQ-1K$\quad$&$\quad$FFHQ-2K$\quad$&$\quad$FFHQ-5K\\
		\midrule
		StyleGAN2-ADA&85.8&21.29&15.39&10.96\\
		StyleGAN2-ADA-Linear& 82& 19.86& 13.01& 9.39\\
		Instance-real& 107& 20.68& 14.15& 8.50\\
		Instance-fake&76.6& 19.45& 12.69& 8.31\\
		%\cy{Instance-fake-noise-no\_cl\_on\_g}&43.48& \textbf{17.41}& 11.30& 8.10\\
		Instance-perturbation&\textbf{43.35}& 17.46& \textbf{11.29}& 8.01\\
		\hline
		InsGen~\cite{yang2021data}&53.93&19.58&11.92&-\\
		$\text{InsGen}^*$&$45.75^*$& $18.21^*$& $11.47^*$&$\textbf{7.83}^*$\\
		\bottomrule
	\end{tabular}
	%%\vspace{-.5em}
\end{table*}

\begin{figure*}[t!]
%%%\vspace{-0.5em}
\setlength{\abovecaptionskip}{0.1cm}
    \setlength{\belowcaptionskip}{-0.3cm}
	\centering
	\includegraphics[scale=0.35]{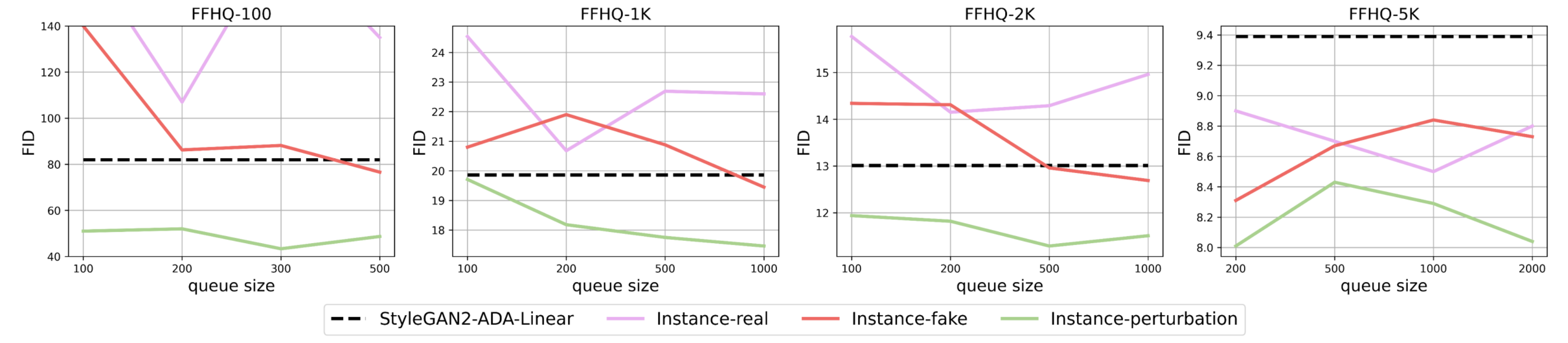}
	\caption{Part of the ablation experiments of different contrastive learning strategies with various queue sizes and datasets.}
	\label{FIG:ablation of Insgen}
	%%\vspace{-1em}
\end{figure*}

%When adding fake instance discrimination without latent augmentation (Instance-fake\textsuperscript{\ref{defination}}), the best synthesis quality for different queue size is consistently increased, with the FID consistently decreased by 5.4, 0.41, 0.32, and 1.08 in Table \ref{Tab:Insgen_ablation_1}, no matter how many unique images the training set includes. However, on small dataset such as FFHQ-100, FFHQ-1K, and FFHQ-2K, the increase is slight and the performance is even decreased when queue size is small. To some extent, the advancement of instance discrimination on DE-GANs is an open problem. 

\noindent\textbf{Instance-fake improves performance marginally on small datasets.} When adding Instance-fake\textsuperscript{\ref{defination}} strategy, compared to the baseline StyleGAN2-ADA-Linear, the best synthesis quality for different queue sizes is consistently improved in Table \ref{Tab:Insgen_ablation_1}, regardless of the number of training data. However, on small datasets such as FFHQ-100, FFHQ-1K, and FFHQ-2K, the increase is marginal, and the performance is even decreased when queue size is small (as shown in Fig. \ref{FIG:ablation of Insgen}). 
% To some extent, the advancement of instance discrimination on DE-GANs is an open problem. 

\noindent\textbf{Instance-perturbation is the key point for DE-GANs.} %In InsGen, a noise perturbation strategy is applied as latent augmentation for fake instance discrimination. 
After using Instance-perturbation\textsuperscript{\ref{defination}}, FID obtains significant improvement regardless of queue size and the number of training images. Specifically, compared to Instance-fake, the best FID for different queue size as shown in Table \ref{Tab:Insgen_ablation_1} acquires consistent improvement. %of -33.25, -1.99, -1.4, and -0.30 on FFHQ-100, FFHQ-1K, FFHQ-2K, and FFHQ-5K datasets, respectively. 
Furthermore, Instance-perturbation also obtains FID improvements of -2.4, -0.75, and -0.18 relative to InsGen (\textit{i.e.} employs instance-real at the same time) on FFHQ-100, FFHQ-1K, and FFHQ-2K datasets, respectively. %Such results further show the effectiveness of regularization on latent space, especially for data-efficient scenarios.
% In particular, the gains decrease as the number of real images goes up. On FFHQ-5K, the synthesis quality of Instance-fake-noise is worse than InsGen. 

\noindent\textbf{The optimal queue size is related to the diversity of negative samples.}
Following MoCo-v2 \cite{he2020momentum}, negative samples of InsGen are stored in a queue to reduce the computational complexity. Two queues, fake instance queue and real instance queue, are adopted in InsGen. The size of them are defined as \textit{fqs} and \textit{rqs} respectively. Empirically, length of the queue tends to be the $5\%$ number of the dataset in MoCo-v2, indicating \textit{rqs} is related to the number of real data. According to our experiments (Fig. \ref{FIG:ablation of Insgen}), we find that the optimal \textit{fqs} in Instance-fake are larger than the optimal \textit{rqs} in Instance-real under the same setting. We argue that the optimal queue size is related to the diversity of negative samples in queues. In Instance-real, the diversity of negative samples is low with limited training data. Thus long \textit{rqs} leads to the degradation of contrastive learning.

\subsection{Instance-perturbation and Latent Space Continuity} 

The above analyses demonstrate that Instance-perturbation is the key point of contrastive learning in DE-GANs and better performance than InsGen can be obtained by using Instance-perturbation alone. Moreover, we argue that Instance-perturbation mitigates the discontinuity of latent space effectively. Intuitively, Instance-perturbation introduces latent similarity prior that images synthesized from the latent codes within a neighbourhood are close to each other, which makes the latent space continuous. Experimentally, Fig. \ref{FIG:motivation} illustrates that Instance-perturbation owns more continuous interpolation, more accurate inversion, better FID, and better PPL compared to StyleGAN2-LDA, StyleGAN2-APA \cite{jiang2021deceive}, and other contrastive learning strategies on Obama dataset. %Table \ref{Tab:Insgen_ablation_1} also shows that the gains of noise perturbation rise as the number of training images goes down, which demonstrates that Instance-perturbation has more contributions when discontinuous of latent space is serious. 
Furthermore, Table \ref{Tab:Perceptual path length} shows that PPL \cite{karras2019style} of Instance-perturbation outperform other methods in Mean and Std. {We also show some similar results of FFHQ-100 dataset in Sec. A.1 of the supplementary materials.} %Both of them are strong evidence of the mitigation of latent space discontinuity through Instance-perturbation.

\begin{table}[t!]
%%\vspace{-1.0em}
	\caption{Perceptual path length (PPL) of $z$ and $w$ spaces on FFHQ-2K (256$\times$256)
		\label{Tab:Perceptual path length}}
	\centering
	\begin{tabular}{ccccc}
		\toprule
		\multirow{2}{*}{Method}&\multicolumn{2}{c}{PPL ($z$)} &\multicolumn{2}{c}{PPL ($w$)} \\
		&$\quad$Mean$\quad$& $\quad$Std. Dev$\quad$&$\quad$Mean $\quad$&$\quad$Std. Dev\\
		\midrule
		StyleGAN2-ADA-Linear&2163&1819&232&100.3\\
		Instance-fake&2399&2082&217&95.6\\
		Instance-perturbation&1729&1626&167&90.5\\
		FakeCLR&\textbf{1411}&\textbf{1490}&\textbf{136}&\textbf{65.4}\\
		\bottomrule
	\end{tabular}
	%%\vspace{-1.0em}
\end{table}

\section{Methodology}
%In this section, we introduce our proposed FakeCLR with three strategies: adaptive latent augmentation, iteration as weight, and variable queue size in Sec. 4.1 and the practical usage of FakeCLR on the state-of-the-artdata augmentation model StyleGAN2-ADA \cite{karras2020training} in Sec. 4.2.

%\subsection{FakeCLR}
As discussed above, real instance discrimination is not conducive to the DE-GANs' training. Hence, we only adopt Instance-perturbation, which is illustrated on the top part of Fig. \ref{FIG:FakeCLR}. Furthermore, bottom part of Fig. \ref{FIG:FakeCLR} introduces three innovative strategies, that are \textcolor[RGB]{0,176,240}{Noise-related Latent Augmentation}, \textcolor[RGB]{238,105,100}{Diversity-aware Queue}, and \textcolor[RGB]{169,209,142}{Forgetting Factor of Queue}, for improving the efficiency of contrastive learning on fake images. In summary, the proposed complete objective function\footnote{Followed InsGen \cite{yang2021data}, Instance-fake ($\mathcal{C}^f$) is added into the generator loss to improve the generated diversity.} of FakeCLR is optimized using:
\begin{equation}
	\begin{gathered}
		\mathcal{\hat L}_{D}=\mathcal{L}_{D}+\lambda^f\mathcal{\hat C}^f_p,\quad  \mathcal{\hat L}_{G}=\mathcal{L}_{G}+\lambda_G\mathcal{C}^f, \\
		\text{where}\quad \mathcal{\hat C}^f_p=\textcolor[RGB]{169,209,142}{\mathcal{\hat C}_{d(\cdot), \phi^f(\cdot)}}\left(T_q(G(\mathbf{z}_{q})), T_{k^+}(G(\mathbf{z}_{q}+\textcolor[RGB]{0,176,240}{\hat\epsilon_q})),\left\{T_{k_i}(G(\mathbf{z}_{k_{i}})),\mathbf{m}_i\right\}_{i=1}^{\textcolor[RGB]{238,105,100}{\hat N}}\right).
	\end{gathered}
\end{equation}

\noindent\textbf{Noise-related Latent Augmentation.}
From Sec. 3.2, noise perturbation is the key point for contrastive learning in DE-GANs, which alleviates the discontinuity of latent space effectively. From this perspective, noise perturbation introduces latent similarity prior to the discriminator, which provides latent-continuity guidance for the generator. However, the original strategy holds the same perturbation radius ($\epsilon_q$), which is not desirable due to the nonuniform latent distribution. More sensibly, a noise-related  perturbation with different latent sampling $z_q$ should be adopted. Generally, $\epsilon_q\propto \left|z_q\right|$ when $z_q\sim \mathcal{N}(0,1)$. It can be interpreted as stronger similarity prior should be required in low-density regions of the latent distribution. In this paper, we adopt a simple implementation that is \textcolor[RGB]{0,176,240}{$\hat{\epsilon}_q= l_1\cdot\left|z_q\right|$}, where $l_1$ is a coefficient. %This simple implementation works well.

\begin{figure*}[t!]
%%\vspace{-1em}
\setlength{\abovecaptionskip}{0.1cm}
    \setlength{\belowcaptionskip}{-0.3cm}
	\centering
	\includegraphics[scale=0.6]{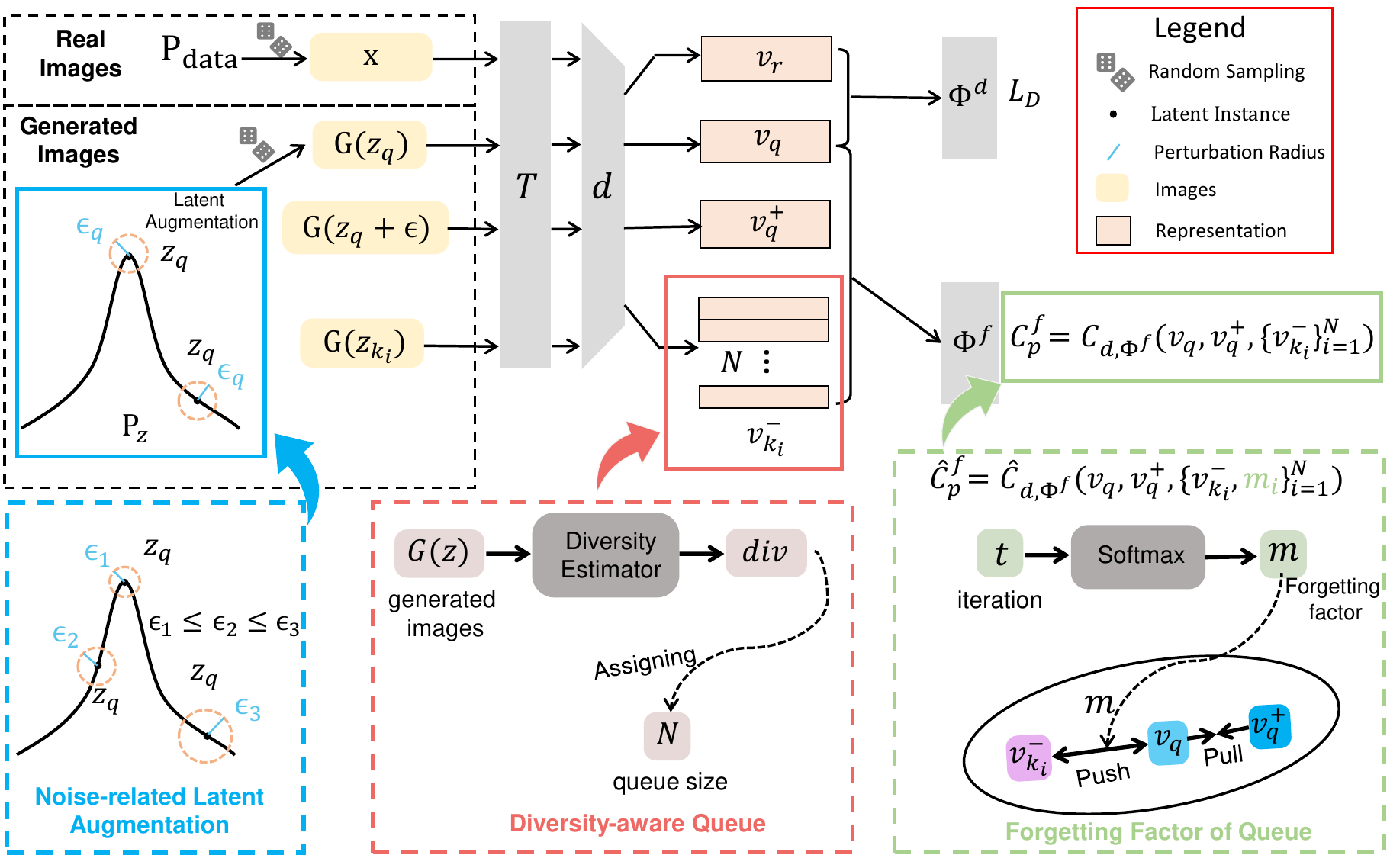}
	\caption{Overview of FakeCLR. Overall, the pipeline only adopting fake instance discrimination with latent augmentation is illustrated on the top of the figure, where the representation is learned from bi-classification loss ($\mathcal{L}_{D}$) and Instance-perturbation loss ($\mathcal{C}^f_{p}$). Here, $N$ is the number of negative samples (queue size), $T(\cdot)$ is the data augmentation, $d$ is the backbone of discriminator, $\phi^d$ and $\phi^f$ are two independent heads. Furthermore, three strategies proposed in this paper are illustrated at the bottom of the figure.}
	\label{FIG:FakeCLR}
\end{figure*}

\noindent\textbf{Diversity-aware Queue.}
The number of negative samples is critical in contrastive learning. Short queue leads to inhibiting convergence \cite{chen2020simple}, while long queue suffers from redundancy and inappropriate negative samples. Empirically, length of the queue is related to the size of dataset, such as $5\%$ number of the dataset in MoCo-v2~\cite{he2020momentum}. However, this empirical hypothesis does not seem to hold up in GANs due to the continuously changing generated images over training iteration. {As described in Sec. 3.1, the optimal queue size may be related to the diversity of samples in queue. Furthermore, the diversity of fake queues is indeed higher than that of real queues and decreases gradually accompanying with training due to the convergence and stability of generator.} Therefore, a diversity-aware queue is required for contrastive learning in fake images. Generally, $N\propto {DE}(G(z))$, where $DE(\cdot)$ is a diversity estimator. In this paper, we also adopt a simple implementation that is \textcolor[RGB]{238,105,100}{$\hat N=l_2\cdot t$} to reduce the computational complexity, where $l_2$ is a negative coefficient and $t$ is the iteration of training. Therefore, queue size in FakeCLR decreases linearly with respect to the training.

\noindent\textbf{Forgetting Factor of Queue.}
The choice of negative pairs has drawn much attention in contrastive learning. \cite{robinson2020contrastive,kalantidis2020hard} demonstrate that hard negative samples capture desirable generalization properties and improve the performance of downstream task. Different from real images, fake images are updated with training iteration. As a result, varying levels of generated samples exist in the negative queue.  
%For contrastive learning in fake images, negative samples are dynamically updated with training iteration. 
However, vanilla contrastive learning considers them to have the same importance. Some iteration-based prior, \textit{i.e.}, the negative samples generated by the current iteration should be given more attention, should be introduced. In this paper, we apply a forgetting factor to negative samples of the queue. Specifically, the contribution of previous samples should be smaller than current samples to contrastive loss. This makes the contrastive term focus more on recently generated samples. The iteration-based InfoNCE is formulated as:
\begin{equation}
	\begin{gathered}
{\mathcal{\hat C}_{F(\cdot), \phi(\cdot)}\left(\mathbf{x}_{q}, \mathbf{x}^+_k,\left\{\mathbf{x}^-_{k_{i}},\textcolor[RGB]{169,209,142}{\mathbf{m}_i}\right\}_{i=1}^{N}\right)=}\\{-\log \frac{\exp \left(\phi\left(\mathbf{v}_{q}\right)^{T} \phi\left(\mathbf{v}^+_{k}\right) / \tau\right)}{\exp \left(\phi\left(\mathbf{v}_{q}\right)^{T} \phi\left(\mathbf{v}^+_{k}\right) / \tau\right)+\sum_{i=1}^{N} \exp\left( \left(\phi\left(\mathbf{v}_{q}\right)^{T} \phi\left(\mathbf{v}^-_{k_{i}}\right)+\textcolor[RGB]{169,209,142}{\mathbf{m}_i}\right) / \tau\right)},}
	\end{gathered}
\end{equation}
where \textcolor[RGB]{169,209,142}{$\mathbf{m}_i$} is the introduced forgetting factor to adjust the importance of different negative samples, as defined by: 
\begin{equation}
	\begin{gathered}
		\quad {\mathbf{m}_i}=\frac{\exp\left(\mathbf{\hat t}_i/\tau_{\mathbf{m}}\right)}{\sum_{j=1}^{N}\exp\left(\mathbf{\hat t}_j/\tau_\mathbf{m}\right)},\quad \text{where}\quad \mathbf{\hat t}_i=\frac{\mathbf{t}_i-\min(\mathbf{t})}{\max(\mathbf{t})-\min(\mathbf{t})},\\ \mathbf{t}=\{\mathbf{t_1},\mathbf{t_2},\cdots,\mathbf{t}_N\}, \quad i=1,2,\cdots,N,
	\end{gathered}
\end{equation}
$\tau_\mathbf{m}$ is a temperature coefficient that controls the distribution $\mathbf{m} :=\{\mathbf{m_1}, \cdots,\mathbf{m}_N\}$, and $t_i$ means that the negative sample $\mathbf{v}^-_{k_{i}}$ is generated by the generator of the $t_i$-th iteration. The distribution of $\mathbf{m}$ under different temperature $\tau_\mathbf{m}$ and queue size $N$ has be shown in Sec. A.2 of supplementary materials. The proposed forgetting factor ($\mathbf{m}$) indeed improves the importance of current negative samples to contrastive term. The proof and discussion are in Sec. A.3 of supplementary materials. Furthermore, a PyTorch-like pseudocode of iteration-based contrastive learning has also be shown in Sec. A.4 of supplementary materials.

\begin{comment}
\subsubsection{Complete Objective Function.}
In summary, with the purposes of three strategies, the proposed complete objective function of FakeCLR are optimized with 
\begin{equation}
	\begin{gathered}
		\mathcal{\hat L}_{D}=\mathcal{L}_{D}+\lambda^f_D\mathcal{\hat C}^f_{D},\quad  \mathcal{\hat L}_{G}=\mathcal{L}_{G}+\lambda_G\mathcal{C}^f_{G}, \\
		\text{where}\quad \mathcal{\hat C}^f_{D}=\textcolor[RGB]{169,209,142}{\mathcal{\hat C}_{d(\cdot), \phi^f(\cdot)}}\left(T_q(G(\mathbf{z}_{q})), T_{k^+}(G(\mathbf{z}_{q}+\textcolor[RGB]{0,176,240}{\hat\epsilon_q})),\left\{T_{k_i}(G(\mathbf{z}_{k_{i}})),\mathbf{m}_i\right\}_{i=1}^{\textcolor[RGB]{238,105,100}{\hat N}}\right).
	\end{gathered}
\end{equation}
\end{comment}

 \begin{table}[t!]
 \caption{Comparison with the state of the arts over FFHQ (256 $\times$ 256): Training with 100, 1K, 2K, and 5K samples. FID (lower is better) is reported as the evaluation metric. our method performs the best consistently. All methods adopt the StyleGAN2 architecture. Results with $^*$ are best on four different queue sizes, which are better than those reported in \cite{yang2021data}.
	\label{Tab:ffhq}}
	%%\vspace{0.2em}
\centering
\begin{tabular}{cccccc}
		\toprule
		Method&Augmentation&$\quad$FFHQ-100$\quad$&FFHQ-1K$\quad$&FFHQ-2k$\quad$&FFHQ-5K\\
		\midrule
		StyleGAN2~\cite{karras2020analyzing}&No&179&100.16&54.3&49.68\\
		GenCo~\cite{cui2021genco}&No&148&65.31&47.32 &27.96\\
		DISP \cite{mangla2022data}&Yes&-&-&21.06&-\\
		LeCam-GAN~\cite{tseng2021regularizing}&Yes&-&21.70&-&-\\
		ADA~\cite{karras2020training}&Yes&85.8&21.29&15.39&10.96\\
		ADA-Linear~\cite{yang2021data}&Yes& 82& 19.86& 13.01& 9.39\\
		APA~\cite{jiang2021deceive}&Yes&65&$18.89$&16.90&$8.38$\\
		InsGen~\cite{yang2021data}&Yes&53.93&19.58&11.92&-\\
		$\text{InsGen}^*$&Yes&$45.75^*$& $18.21^*$&$ 11.47^*$&$7.83^*$\\
		FakeCLR (ours)&Yes&\textbf{42.56}&\textbf{15.92}&\textbf{9.90}&\textbf{7.25}\\
		\bottomrule
	\end{tabular}
	%%\vspace{-1.0em}
\end{table}

\section{Experiments}
We evaluate the advancement of our proposed FakeCLR on multiple datasets. After introducing implementation details, we present the comparison on FFHQ \cite{karras2019style} dataset. Sec. 5.3 presents the comparison on few-shot datasets (Obama~\cite{zhao2020differentiable}, Grumby Cat~\cite{zhao2020differentiable}, Panda~\cite{zhao2020differentiable}, and AnimalFace \cite{si2011learning}). Furthermore, ablation studies showing the importance of different components are demonstrated in Sec. 5.4.

\subsection{Implementation Details}
Similar to InsGen \cite{yang2021data}, our FakeCLR also can be easily implemented on any GAN framework. In this paper, we take the state-of-the-art GANs model, StyleGAN2 \cite{karras2020analyzing}, as an example to demonstrate how FakeCLR is implemented.

InsGen reuses the network structure and learning hyper-parameters of StyleGAN2 \cite{karras2020analyzing}, the augmentation pipeline of ADA \cite{karras2020training}, and contrastive learning settings of MoCo-v2 \cite{he2020momentum} achieving state-of-the-art results with limited training data. Additionally, a linear hyper-parameter that controls the strength of augmentation (ADA-Linear) has also been proposed to replace the adaptive hyper-parameter in ADA \cite{karras2020training}. Therefore, we exactly reuse the above settings in InsGen for a fair comparison. Besides, some new hyper-parameter in FakeCLR are setted as $l_1=0.1$ and $\tau_{\mathbf{m}}=0.01$. All experiments are conducted on two NVIDIA Tesla A100 GPUs with 64 batch size. 

\subsection{Results on FFHQ}
FFHQ contains 70K high-resolution images of human faces. To reduce the computational complexity, we resize images to 256$\times 256$. For the experiments of limited data, we collect a subset of training data by randomly sampling with 100, 1K, 2K, and 5K images. Regardless of the number of the training data, the Fréchet Inception Distance (FID) \cite{heusel2017gans} metric is calculated between 50K fake images and all 70K images, evaluating the performance in terms of both reality and diversity. Moreover, the implementation of our FakeCLR is based on the official implementation of \href{https://github.com/NVlabs/stylegan2-ada}{StyleGAN2-ADA} and \href{https://genforce.github.io/insgen/}{InsGen}. 
\begin{figure*}[t!]
 %%\vspace{-1.0em}
\setlength{\abovecaptionskip}{0.1cm}
    \setlength{\belowcaptionskip}{-0.3cm}
	\centering
	\includegraphics[scale=0.42]{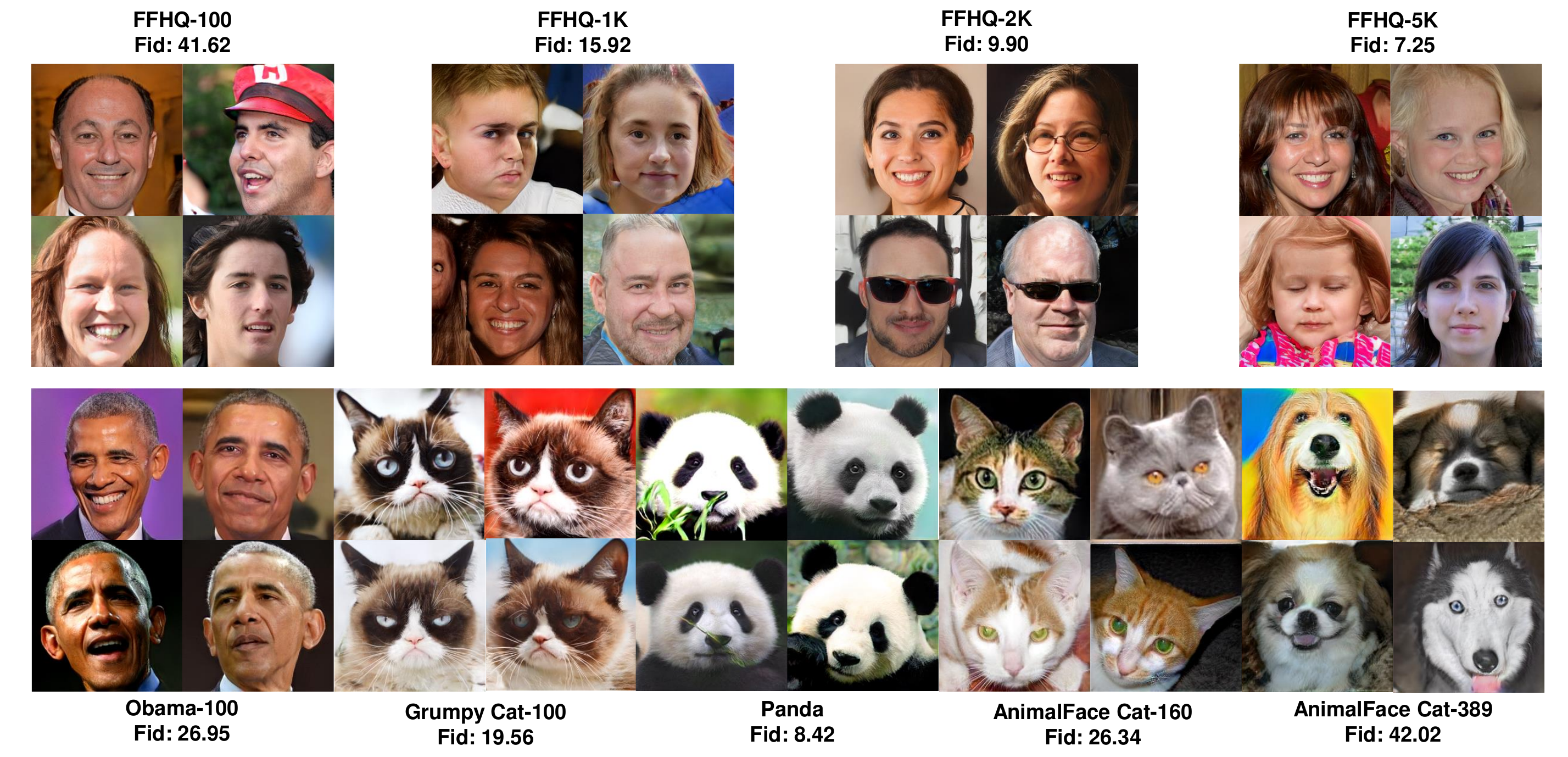}
	\caption{Generated images and the corresponding FID under various datasets. All images are synthesized randomly without truncation.}
	\label{FIG:generated_images}
	%\vspace{-1.0em}
\end{figure*}

Table \ref{Tab:ffhq} shows the comparison with previous studies on FFHQ-100, FFHQ-1K, FFHQ-2K, and FFHQ-5k. We compare to most recent works, and our method achieves the new state-of-the-art consistently. Specifically, compared to the baseline method InsGen, our method improves the FID on FFHQ by 3.19, 2.29, 1.57, and 0.58 with 100, 1K, 2K, and 5K training images, respectively. Such results show the effectiveness of our proposed FakeCLR, especially for data-efficient scenarios. Some generated examples and the corresponding FID on FFHQ are presented on the top part of Fig. \ref{FIG:generated_images}. More examples generated by InsGen \cite{yang2021data} and our FakeCLR are presented in Sec. A.5 of the supplementary materials. Furthermore, we also analyze the computation cost in Sec. A.6 of the supplementary materials. Note that the three strategies adopted in FakeCLR only introduce a fraction of the cost. More importantly, our FakeCLR achieves better results even with $8.8\%$ less computational cost than InsGen \cite{yang2021data}. {Some qualitative analysis of nearest neighbor test as done in DiffAugment \cite{zhao2020differentiable} in the LPIPS and pixel space, and comparisons on PPL and KID metrics on the FFHQ-100 dataset have also been presented in Sec. A.7 and Sec. A.8 of the supplementary materials, respectively.}

\subsection{Results on Few-shot Generation}
We also evaluate the advancement of FakeCLR on few-shot generation that contains five datasets (\textit{i.e.}, Obama, Grumpy Cat, Panada, AnimalFace Cat, and AnimalFace Dog) with 100, 100, 100, 160, and 389 training images, respectively. All models in this section are trained with the resolution of $256 \times 256$, and the datasets can be found at the \href{https://drive.google.com/file/d/1aAJCZbXNHyraJ6Mi13dSbe7pTyfPXha0/view}{link}. FID \cite{heusel2017gans} is calculated between 50K fake images and the training images. Other settings are the same as experiments on FFHQ. 

Table \ref{Tab:few-shot} demonstrates the quantitative comparison on few-shot generation. Top five baselines in Table \ref{Tab:few-shot}  pre-train the model with large FFHQ-70K. It can be observed that FakeCLR can even achieve better FID by using only 100-400 training samples. The synthesis quality is substantially improved by our method on most datasets except for Grumpy Cat datasets. Specifically, we obtain an FID improvements over GenCo \cite{cui2021genco} by $16.3\%$, $-9.9\%$, $11.3\%$, $14.7\%$, and $15.3\%$ on five datasets, respectively. Compared to InsGen \cite{yang2021data}, the FID improvements are $16.9\%$, $11.1\%$, $14.5\%$, $20.2\%$, and $15.3\%$, respectively. Moreover, some qualitative results are shown on the second row of Fig. \ref{FIG:generated_images}. More examples generated by InsGen \cite{yang2021data} and FakeCLR can be found in Sec. A.9 of the supplementary. {Furthermore, nearest neighbor test in the LPIPS and pixel space, and comparisons on PPL and KID metrics on the Obama dataset have also been presented in Sec. A.7 and Sec. A.8 of the supplementary, respectively.}

\begin{table}[t!]
\scriptsize
  \caption{Comparison with previous works over 100-shot and AFHQ: Training with 100 (Obama, Grumpy Cat, and Panda), 160 (AFHQ Cat), and 389 (AFHQ Dog) samples. FID (lower is better) is reported as the evaluation metric. Our method performs the best in most datasets. All methods adopt the StyleGAN2 architecture except for FastGAN+DA.
		\label{Tab:few-shot}}
%\setlength{\abovecaptionskip}{0.1cm}
% \setlength\tabcolsep{2pt}
%\setlength{\belowcaptionskip}{-0.3cm}
%\scriptsize
\centering
\begin{tabular}{c|c|c|ccc|cc}
\hline 	\multirow{2}{*}{Method} &{Massive} & Pre-training & \multicolumn{3}{|c|}{ 100-shot } & \multicolumn{2}{c}{ AnimalFace } \\
& Augmentation & w/ 70K images & Obama & Grumpy Cat & Panda & Cat & Dog \\
\hline Scale/shift \cite{noguchi2019image} & No & Yes & 50.72 & 34.20 & 21.38 & 54.83 & 83.04 \\
MineGAN \cite{wang2020minegan} & No & Yes & 50.63 & 34.54 & 14.84 & 54.45 & 93.03 \\
TransferGAN \cite{wang2018transferring} & No & Yes & 48.73 & 34.06 & 23.20 & 52.61 & 82.38 \\
TransferGAN + DA \cite{zhao2020differentiable} & Yes & Yes & 39.85 & 29.77 & 17.12 & 49.10 & 65.57 \\
FreezeD \cite{mo2020freeze} & No & Yes & 41.87 & 31.22 & 17.95 & 47.70 & 70.46 \\
\hline StyleGAN2 \cite{karras2020analyzing} & No & No & 80.20 & 48.90 & 34.27 & 71.71 & 130.19 \\

\hline DA \cite{zhao2020differentiable} & Yes & No & 46.87 & 27.08 & 12.06 & 42.44 & 58.85 \\
FastGAN+DA \cite{liu2020towards}&Yes&No&41.05&26.65&10.03&35.11&50.66\\
ADA \cite{karras2020training} & Yes & No & 45.69 & 26.62 & 12.90 & 40.77 & 56.83 \\
ADA-Linear \cite{yang2021data}& Yes & No &34.41&25.67&10.23&36.30&54.39\\
SGLP \cite{kong2021smoothing}& Yes & No & 45.37 & 26.52 & -& - & - \\
LeCam-GAN \cite{tseng2021regularizing} & Yes & No & 33.16 & 24.93 & 10.16 & 34.18 & 54.88 \\
GenCo \cite{cui2021genco}& Yes & No & 32.21&\textbf{17.79}&9.49& 30.89& 49.63 \\
InsGen~\cite{yang2021data}&Yes&No&32.42&22.01&9.85&33.01&44.93\\
FakeCLR (Ours)&Yes&No&\textbf{26.95}&19.56&\textbf{8.42}&\textbf{26.34}&\textbf{42.02}\\
\hline
\end{tabular}
%%\vspace{-1.0em}
\end{table}

\subsection{Ablation Study}
\begin{table*}[t!]
%%\vspace{-2.0em}
	\caption{Ablation study of FakeCLR. The proposed strategies in FakeCLR all improve the generation over the baseline. Here, the first line is baseline StyleGAN2-ADA-Linear, and the other lines mean adding the proposed strategies.
		\label{Tab:abalation study}}
	\centering
	\begin{tabular}{cccc|cccc}
		\toprule
		$\mathcal{C}^f_p$\quad&\quad\textcolor[RGB]{0,176,240}{$\hat{\epsilon}_q$}\quad&\quad$\textcolor[RGB]{169,209,142}{\mathcal{\hat C}_{d(\cdot), \phi^f(\cdot)}}$\quad&\quad$\textcolor[RGB]{238,105,100}{\hat N}\quad$&\quad FFHQ-100\quad&\quad FFHQ-1K\quad&\quad FFHQ-2K\quad&\quad FFHQ-5K\\
		\midrule
		&&&&\quad82.00&\quad 19.86&\quad 13.01&\quad 9.39\\
		\checkmark&&&&\quad43.35&\quad 17.46&\quad 11.29&\quad 8.01\\
		\checkmark\quad&\quad\checkmark&&&\quad44.10&\quad17.09&\quad10.65&\quad7.89\\
		\checkmark\quad&\quad\checkmark&\checkmark&&\quad\textbf{41.62}&\quad16.05&\quad10.40&\quad7.48\\
		\checkmark\quad&\quad\checkmark&\checkmark&\checkmark&\quad42.56&\quad\textbf{15.92}&\quad\textbf{9.90}&\quad\textbf{7.25}\\
		
		\bottomrule
	\end{tabular}
	%%\vspace{-1.0em}
\end{table*}
\textbf{Component-level investigation}. We investigate the effectiveness of each component on the FFHQ dataset. As shown in Table \ref{Tab:abalation study}, all components are effective and can improve the performance %with a clear margin. 
Among them, $\textcolor[RGB]{169,209,142}{\mathcal{\hat C}_{d(\cdot), \phi^f(\cdot)}}$ achieves the most significant improvement. In addition, combining all components performs the best on FFHQ-1K, FFHQ-2K, and FFHQ-5K. On FFHQ-100, Diversity-aware Queue does not further improve the performance because the training data size is extremely small. Detailed ablations on FFHQ-2K are introduced as follows and more results can be found in Sec. A.10, A.11, and A.12 of the supplementary.

\noindent\textbf{Noise-related Latent Augmentation}. Perturbation Radius ($\epsilon_q$) is an important hyper-parameter, and we compare different $\epsilon_q$ to our Noise-related latent augmentation ($\hat\epsilon_q$) in Fig. \ref{FIG:ablation_others} (a). The result shows that our strategy outperforms the best fixed strategy with a clear margin. Furthermore, negative prior that is $\epsilon_q\propto 1/\left|z_q\right|$ has been introduced in Negative Noise-related Latent Augmentation. The results demonstrate that negative prior severely compromise performance, which further supports our motivation of Noise-related Latent Augmentation.

\noindent\textbf{Forgetting Factor of Queue}. The queue of negative samples is always a key factor in contrastive learning. An effective negative queue is supposed to be with diverse and up-to-date generated samples. Simply tuning batch size would cause a trade-off, \textit{i.e.}, small batch size involves old samples, and large batch size reduces data diversity. As shown in Fig. \ref{FIG:ablation_others} (b), 256 is the optimal batch size. And proposed Forgetting Factor of Queue provides a solution to break such trade-off, which suppresses old samples involved by small batch size and can achieve better performance. Note that for fair comparisons, we keep the training batch size constant and only adopt gradient accumulation to change the number of negative samples entering the queue for each iteration in contrastive learning (\textit{i.e.}, batch size in Forgetting Factor of Queue). Fig. \ref{FIG:ablation_others} (c) further shows the impact of Temperature ($\tau_\mathbf{m}$) in our method. As shown in Sec. A.2 of the supplementary materials, $\tau_\mathbf{m}$ controls the distribution of $\mathbf{m}$ and smaller $\tau_\mathbf{m}$ indicate higher weights for samples in current iteration. The results show that proposed method outperform the original method, %with a clear margin, 
and $\tau_\mathbf{m}=0.001$ achieves the best performance.

\begin{figure*}[t!]
\setlength{\abovecaptionskip}{0.1cm}
    \setlength{\belowcaptionskip}{-0.3cm}
	\centering
	\includegraphics[scale=0.58]{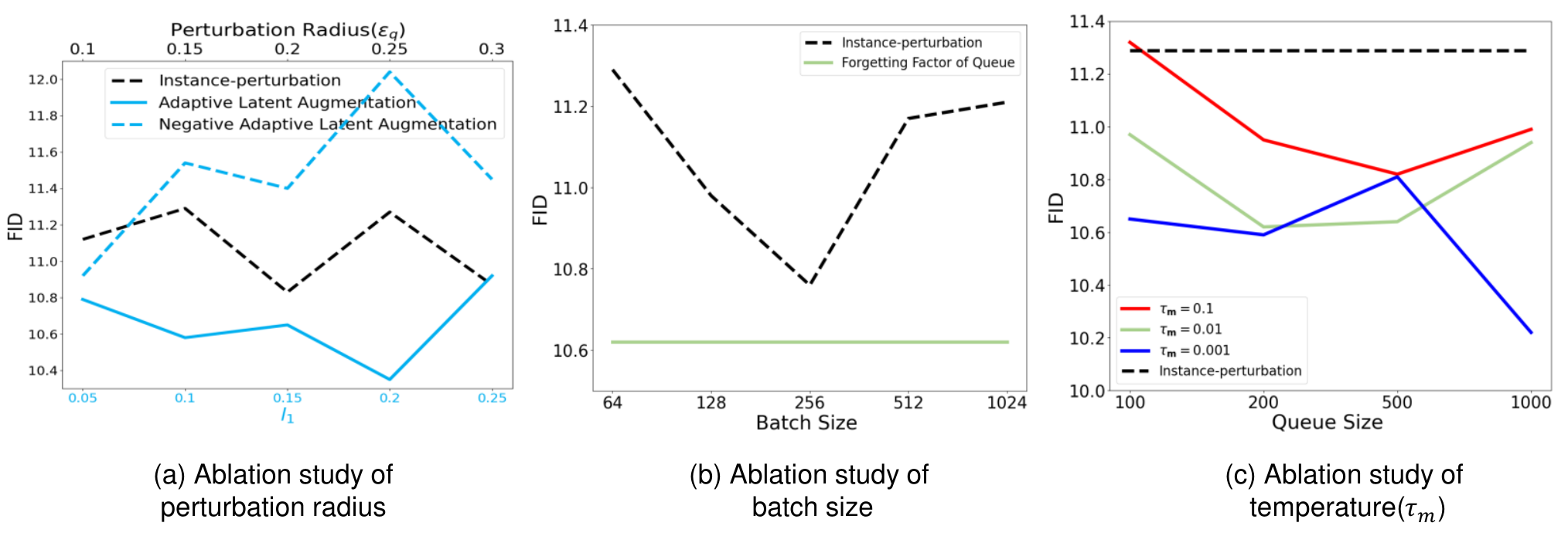}
	\caption{Ablation studies of different components in strategies of FakeCLR. (a): Ablation study of perturbation radius ($\epsilon_q$) in Noise-related Latent Augmentation. (b): Ablation study of batch size in Forgetting Factor of Queue. (c): Ablation study of temperature ($\tau_{\mathbf{m}}$) in Forgetting Factor of Queue.}
	\label{FIG:ablation_others}
	%%\vspace{-1.0em}
\end{figure*}

\section{Conclusions}
%In this work, we propose a new factor, generator overfitting, that causes the degradation of DE-GANs. The generator overfitting causes the discontinuity of the latent space and low diversity of generated images. Meanwhile, we argue that the success of contrastive learning in DE-GANs is actually attributed to the latent space augmentation, which overcomes the generator overfitting issue. Based on these, we propose a new contrastive learning method, FakeCLR, in DE-GANs. The proposed method acquires new state of the art on both few-shot generation and limited-data generation.
 In this paper, we revisit the principle of different contrastive learning strategies in DE-GANs. According to experiments, we identify that latent similarity prior introduced by Instance-perturbation makes the main contribution to DE-GANs’ performance. Furthermore, we also explore and emphasize a major bottleneck of existing DE-GAN methods, discontinuous of the latent space and demonstrate that Instance-perturbation can mitigate it effectively. Based on these, we propose a new contrastive learning method, FakeCLR, in DE-GANs, %The proposed method 
 which acquires a new state of the art on both few-shot generation and limited-data generation.

\paragraph{\textbf{Acknowledgements}}
The work is partially supported by the National Natural Science Foundation of China under Grand No.U19B2044 and No.61836011.

\clearpage
\bibliographystyle{splncs}
\bibliography{egbib}

\appendix
\onecolumn
\section{Supplementary Materials}
\subsection{Qualitative Results of latent space continuity on FFHQ-100 dataset}
Some visual interpolation results on FFHQ-2K dataset are shown in Fig. \ref{FIG:motivation_ffhq-2k}.

\begin{figure*}[t!]
%%\vspace{-1em}
	\centering
	\includegraphics[scale=0.45]{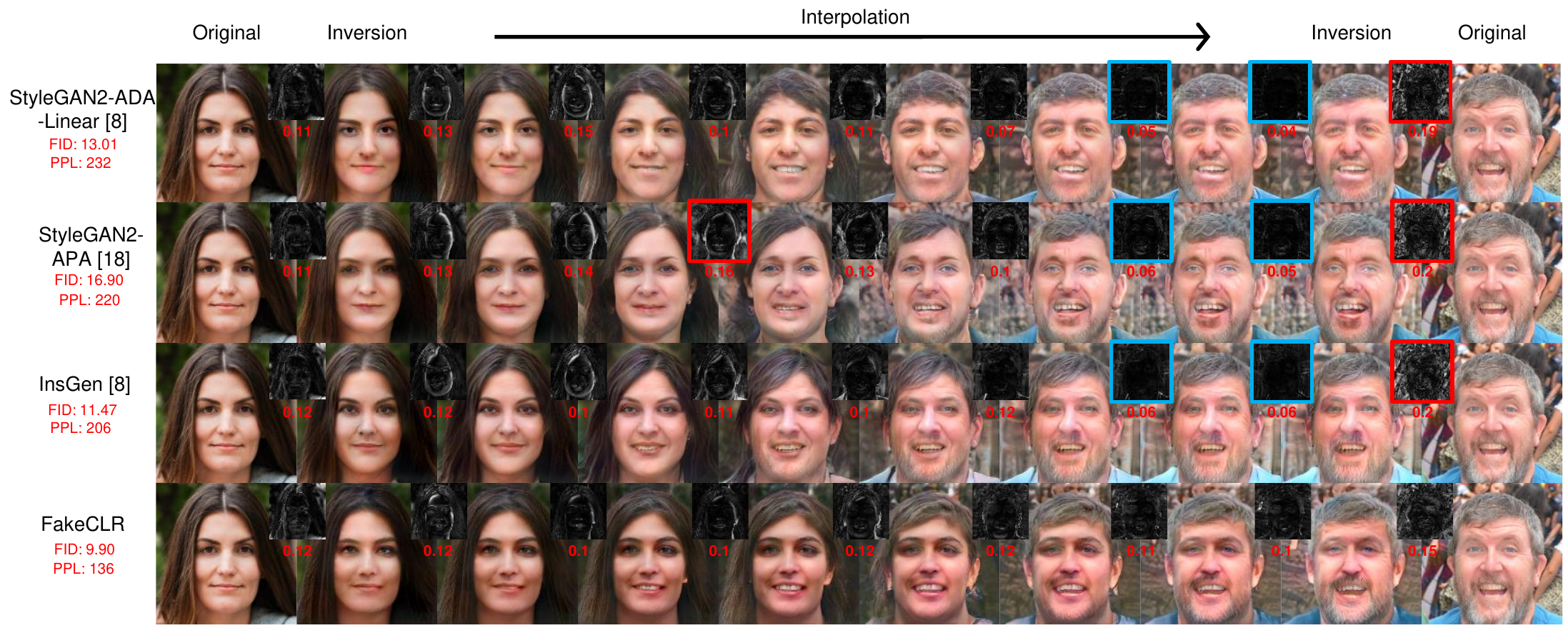}
	\caption{Training GANs with 2K training samples (FFHQ-2K dataset) typically results in severe discontinuity in latent space, \textit{i.e.} under-diversity interpolation on the top three rows. Compared to some reference studies, generator trained with FakeCLR show more accurate inversion, smoother latent space, more diverse interpolation, and better FID and PPL. The small grey images visualize the difference between the two face images. The red numbers are mean of pixel-wise difference, we highlighted the difference score$>0.15$ with \cy{ red} border and the difference score$\leq0.6$ with \hl{ blue} border}
	\label{FIG:motivation_ffhq-2k}
	%%\vspace{-1em}
\end{figure*}
\subsection{The Distribution of Forgetting Factor ($\mathbf{m}$)}
Fig. \ref{figure:weight_distribution} illustrates the distribution of forgetting factor $\mathbf{m}$ under different temperature coefficient $\tau_\mathbf{m}$ and queue size $N$.
\begin{figure}[t!]
%\vspace{-1.0em}
	\begin{subfigure}[b]{0.23\textwidth}
		\centering
		\includegraphics[width=\textwidth]{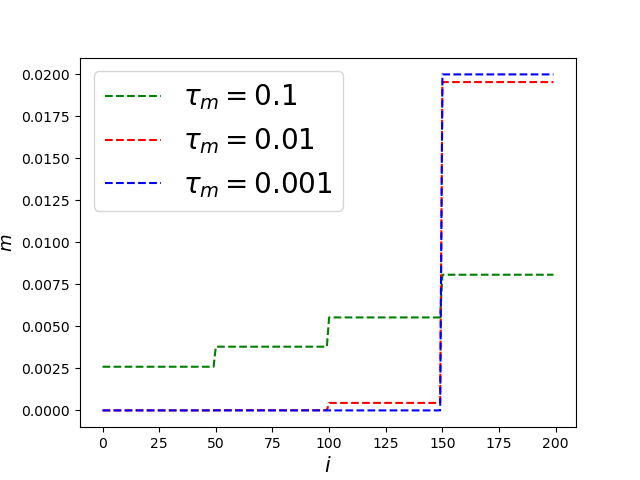}
		\caption{N=200}
		\label{figure:adv attack without adv training}
	\end{subfigure}
	\begin{subfigure}[b]{0.23\textwidth}
		\centering
		\includegraphics[width=\textwidth]{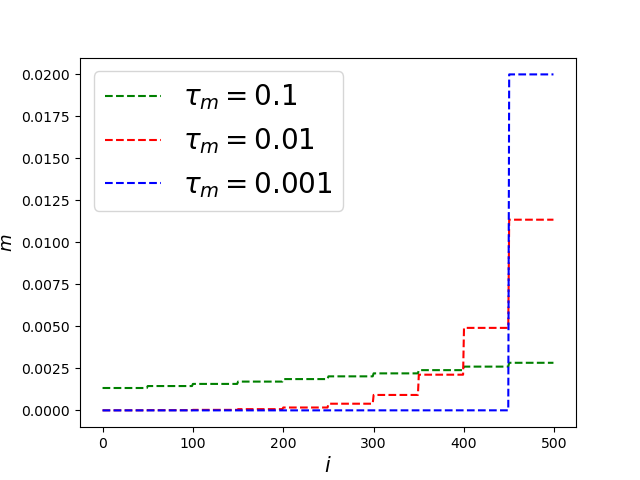}
		\caption{N=500}
		\label{figure:adv attack with adv training}
	\end{subfigure}
	\begin{subfigure}[b]{0.23\textwidth}
		\centering
		\includegraphics[width=\textwidth]{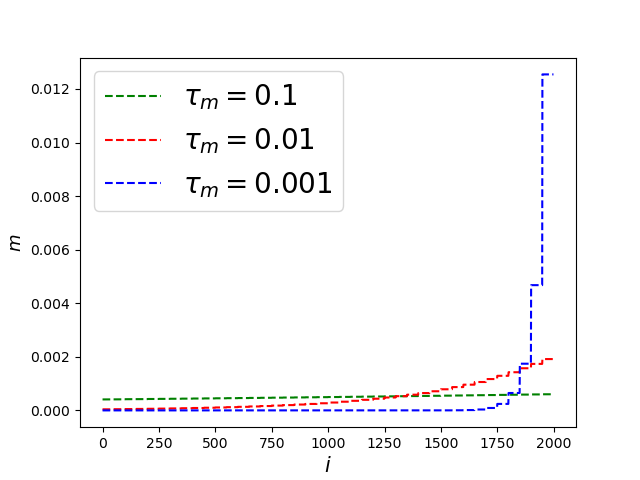}
		\caption{N=2000}
		\label{figure:adv attack with adv training}
	\end{subfigure}
	\begin{subfigure}[b]{0.23\textwidth}
		\centering
		\includegraphics[width=\textwidth]{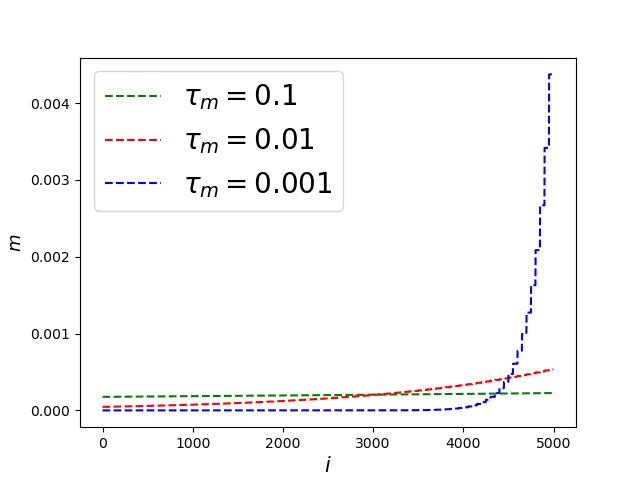}
		\caption{N=5000}
		\label{figure:adv attack without adv training}
	\end{subfigure}
	\centering
	\caption{Weight distribution $\mathbf{m}$ under different temperature coefficient $\tau_\mathbf{m}$ and queue size $N$, where the batch size is set to 50 (for illustrative purposes only).}
	\label{figure:weight_distribution}
\end{figure}

\subsection{Proof and Discussion of Iteration-based contrastive learning}

Adding the iteration-based weight ($\color{red}{\mathbf{m}_i}$) to negative samples leads to a iteration-based InfoNCE loss:
\begin{equation}
	\begin{gathered}
{\mathcal{\hat C}_{F(\cdot), \phi(\cdot)}\left(\mathbf{x}_{q}, \mathbf{x}^+_k,\left\{\mathbf{x}^-_{k_{i}},{\color{red}\mathbf{m}_i}\right\}_{i=1}^{N}\right)=}\\{-\log \frac{\exp \left(\phi\left(\mathbf{v}_{q}\right)^{T} \phi\left(\mathbf{v}^+_{k}\right) / \tau\right)}{\exp \left(\phi\left(\mathbf{v}_{q}\right)^{T} \phi\left(\mathbf{v}^+_{k}\right) / \tau\right)+\sum_{i=1}^{N} \exp\left( \left(\phi\left(\mathbf{v}_{q}\right)^{T} \phi\left(\mathbf{v}^-_{k_{i}}\right)+{\color{red}\mathbf{m}_i}\right) / \tau\right)},}
	\end{gathered}
\end{equation}

It may be useful to let: 
\begin{equation}
	\begin{gathered}
Y=\exp \left(\phi\left(\mathbf{v}_{q}\right)^{T} \phi\left(\mathbf{v}^+_{k}\right) / \tau\right)+\sum_{i=1}^{N} \exp \left(\left(\phi\left(\mathbf{v}_{q}\right)^{T} \phi\left(\mathbf{v}^-_{k_{i}}\right)+{\color{red}\mathbf{m}_i}\right) / \tau\right), \\
U=\exp \left(\phi\left(\mathbf{v}_{q}\right)^{T} \phi\left(\mathbf{v}^+_{k}\right) / \tau\right).
	\end{gathered}
\end{equation}

\begin{proposition}

The gradient of iteration-based InfoNCE loss is $\nabla\mathcal{\hat C}_{F(\cdot), \phi(\cdot)}=\frac{\nabla Y\cdot U-Y\cdot \nabla U}{Y\cdot U}$. Specifically, 
\begin{equation}
	\left\{
	\begin{aligned}
		\nabla_{\phi\left(\mathbf{v}_{q}\right)}\mathcal{\hat C}_{F(\cdot), \phi(\cdot)}&=\frac{\sum_{i=1}^{N} \exp \left(\left(\phi\left(\mathbf{v}_{q}\right)^{T} \phi\left(\mathbf{v}^-_{k_{i}}\right)+{\color{red}\mathbf{m}_i}\right) / \tau\right)\cdot\left(\phi\left(\mathbf{v}^-_{k_{i}}\right)-\phi\left(\mathbf{v}^+_{k}\right)\right)}{Y\cdot\tau}\\
		\nabla_{\phi(\mathbf{v}^+_{k})}\mathcal{\hat C}_{F(\cdot), \phi(\cdot)}&=-\frac{\sum_{i=1}^{N} \exp \left(\left(\phi\left(\mathbf{v}_{q}\right)^{T} \phi\left(\mathbf{v}^-_{k_{i}}\right)+{\color{red}\mathbf{m}_i}\right) / \tau\right)\cdot\phi\left(\mathbf{v}_q\right)}{Y\cdot\tau}\\
		\nabla_{\phi(\mathbf{v}^-_{k_{i}})}\mathcal{\hat C}_{F(\cdot), \phi(\cdot)}&=\frac{\exp \left(\left(\phi\left(\mathbf{v}_{q}\right)^{T} \phi\left(\mathbf{v}^-_{k_{i}}\right)+{\color{red}\mathbf{m}_i}\right)/ \tau\right)\cdot\phi\left(\mathbf{v}_{q}\right)}{Y\cdot\tau}
	\end{aligned}
\label{eq:loss_gradient}
\right.
\end{equation}
\end{proposition}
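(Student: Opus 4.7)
The proposition is essentially a routine application of the chain rule applied to a log-ratio. The plan is to first rewrite the loss in the compact form $\mathcal{\hat C}=-\log(U/Y)=\log Y-\log U$, from which the identity $\nabla\mathcal{\hat C}=\nabla Y/Y-\nabla U/U=(\nabla Y\cdot U-Y\cdot\nabla U)/(YU)$ follows immediately from the derivative of $\log$. This establishes the general formula stated in the proposition; the remaining work is to specialize it to the three variables $\phi(\mathbf{v}_q)$, $\phi(\mathbf{v}^+_k)$, and $\phi(\mathbf{v}^-_{k_i})$ by computing the partial derivatives of $U$ and $Y$.

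Next I would compute, one variable at a time, using only $\nabla_x \exp(a^T x/\tau)=\exp(a^T x/\tau)\cdot a/\tau$ (and noting that the forgetting factor $\mathbf{m}_i$ is a constant with respect to all three variables, since it depends only on the iteration indices). For $\phi(\mathbf{v}^+_k)$, only $U$ depends on it, so $\nabla_{\phi(\mathbf{v}^+_k)}U=U\cdot\phi(\mathbf{v}_q)/\tau$ and $\nabla_{\phi(\mathbf{v}^+_k)}Y$ equals the same quantity, so the numerator of the general formula telescopes to $(U-Y)\cdot U\cdot\phi(\mathbf{v}_q)/\tau=-\bigl(\sum_i\exp((\phi(\mathbf{v}_q)^T\phi(\mathbf{v}^-_{k_i})+\mathbf{m}_i)/\tau)\bigr)\cdot U\cdot\phi(\mathbf{v}_q)/\tau$, which after division by $YU$ yields the second line of \eqref{eq:loss_gradient}. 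For $\phi(\mathbf{v}^-_{k_i})$, only the $i$-th summand of $Y$ depends on it and $U$ does not, giving directly the third line.

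The only case requiring a little more bookkeeping is $\phi(\mathbf{v}_q)$, which appears in both $U$ and every summand of $Y$. Here $\nabla_{\phi(\mathbf{v}_q)}Y=U\cdot\phi(\mathbf{v}^+_k)/\tau+\sum_i\exp((\phi(\mathbf{v}_q)^T\phi(\mathbf{v}^-_{k_i})+\mathbf{m}_i)/\tau)\cdot\phi(\mathbf{v}^-_{k_i})/\tau$ and $\nabla_{\phi(\mathbf{v}_q)}U=U\cdot\phi(\mathbf{v}^+_k)/\tau$; substituting into $(\nabla Y\cdot U-Y\cdot\nabla U)/(YU)$, the $\phi(\mathbf{v}^+_k)$ contributions in $\nabla Y\cdot U$ and $Y\cdot\nabla U$ combine through $U-Y=-\sum_i\exp((\phi(\mathbf{v}_q)^T\phi(\mathbf{v}^-_{k_i})+\mathbf{m}_i)/\tau)$ to produce the $-\phi(\mathbf{v}^+_k)$ term inside the summand of the first line.

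There is no real obstacle here: the argument is bookkeeping rather than insight. The only place where one must be slightly careful is in the $\phi(\mathbf{v}_q)$ case, where one has to recognize that the $\phi(\mathbf{v}^+_k)$ contributions from $U^2\phi(\mathbf{v}^+_k)/\tau$ and $-YU\phi(\mathbf{v}^+_k)/\tau$ do not cancel but instead collapse into the desired $-\phi(\mathbf{v}^+_k)$ factor multiplied by the sum over negatives. Once this algebraic simplification is done cleanly, dividing by $Y\tau$ recovers exactly the stated expressions in \eqref{eq:loss_gradient}.
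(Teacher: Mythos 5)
Your proposal is correct and follows essentially the same route as the paper's proof: both write $\mathcal{\hat C}=\log Y-\log U$, apply $\nabla\mathcal{\hat C}=\nabla Y/Y-\nabla U/U$ (equivalently $(\nabla Y\cdot U-Y\cdot\nabla U)/(YU)$), and then compute the partial derivatives of $U$ and $Y$ with respect to each of the three variables, using $Y-U=\sum_{i=1}^{N}\exp\left(\left(\phi(\mathbf{v}_q)^{T}\phi(\mathbf{v}^-_{k_i})+\mathbf{m}_i\right)/\tau\right)$ to collapse the $\phi(\mathbf{v}^+_k)$ terms in the query-gradient case. The algebra in all three cases matches the paper's Eq.~(\ref{eq:loss_gradient}).
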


\begin{proof}
\begin{equation}
\scriptsize
	\begin{aligned}
		&\nabla_{\phi(\mathbf{v}_{q})}\mathcal{\hat C}_{F(\cdot), \phi(\cdot)}=\frac{\nabla_{\phi(\mathbf{v}_{q})}Y}{Y}-\frac{\nabla_{\phi(\mathbf{v}_{q})}U}{U}\\&=\frac{\exp \left(\phi\left(\mathbf{v}_{q}\right)^{T} \phi(\mathbf{v}^+_{k}) / \tau\right)\cdot \phi(\mathbf{v}^+_{k})+\sum_{i=1}^{N} \exp \left(\left(\phi\left(\mathbf{v}_{q}\right)^{T} \phi(\mathbf{v}^-_{k_{i}})+{\color{red}\mathbf{m}_i}\right) / \tau\right)\cdot\phi(\mathbf{v}^-_{k_{i}})-Y\cdot\phi(\mathbf{v}^+_{k}) }{Y\cdot\tau}\\
		&=\frac{\sum_{i=1}^{N} \exp \left(\left(\phi\left(\mathbf{v}_{q}\right)^{T} \phi(\mathbf{v}^-_{k_{i}})+{\color{red}\mathbf{m}_i}\right) / \tau\right)\cdot\left(\phi(\mathbf{v}^-_{k_{i}})-\phi(\mathbf{v}^+_{k})\right)}{Y\cdot\tau}\\
		&\nabla_{\phi(\mathbf{v}^+_{k})}\mathcal{\hat C}_{F(\cdot), \phi(\cdot)}=\frac{\nabla_{\phi(\mathbf{v}^+_{k})}Y}{Y}-\frac{\nabla_{\phi(\mathbf{v}^+_{k})}U}{U}
		=\frac{U\cdot\phi\left(\mathbf{v}_{q}\right)}{Y\cdot\tau}-\frac{\phi\left(\mathbf{v}_{q}\right)}{\tau}\\
		&=-\frac{\sum_{i=1}^{N} \exp \left(\left(\phi\left(\mathbf{v}_{q}\right)^{T} \phi(\mathbf{v}^-_{k_{i}})+{\color{red}\mathbf{m}_i}\right) / \tau\right)\cdot\phi\left(\mathbf{v}_q\right)}{Y\cdot\tau}\\
		&\nabla_{\phi(\mathbf{v}^-_{k_{i}})}\mathcal{\hat C}_{F(\cdot), \phi(\cdot)}=\frac{\nabla_{\phi(\mathbf{v}^-_{k_{i}})}Y}{Y}-\frac{\nabla_{\phi(\mathbf{v}^-_{k_{i}})}U}{U}
		=\frac{\exp \left(\left(\phi\left(\mathbf{v}_{q}\right)^{T} \phi(\mathbf{v}^-_{k_{i}})+{\color{red}\mathbf{m}_i}\right)/ \tau\right)\cdot\phi\left(\mathbf{v}_{q}\right)}{Y\cdot\tau}
	\end{aligned}
\end{equation}
\end{proof}

As we can see, all of the gradient of InfoNCE loss in Eq. (\ref{eq:loss_gradient}) are related to iteration-based weight ${\color{red}\mathbf{m}_i}$. Eq. (\ref{eq:loss_gradient}) makes intuitive sense: (i) The proposed ${\color{red}\mathbf{m}_i}$ improves the gradient from the positive samples. (ii) In particular, for negative samples, the proposed ${\color{red}\mathbf{m}_i}$ improves the gradient of end-of-queue samples that are more similar to the query and can be considered as hard samples, which significantly improves the efficiency of contrastive learning.

\subsection{Pseudocode of Iteration-based Contrastive Learning}
We also demonstrate the PyTorch-like pseudocode in Algorithm \ref{algo:time-based}.

\definecolor{commentcolor}{RGB}{110,154,155}   % define comment color
\newcommand{\PyComment}[1]{\ttfamily\textcolor{commentcolor}{\# #1}}  % add a "#" before the input text "#1"
\newcommand{\PyCode}[1]{\ttfamily\textcolor{black}{#1}} % \ttfamily is the code font
\begin{algorithm}[h]
    \scriptsize
	\SetAlgoLined
	\PyComment{f\_q, f\_k: encoder networks for query and key (NxC)} \\
	\PyComment{queue: dictionary as a queue of K keys (CxK)} \\
	\PyComment{queue\_label: dictionary as a queue of K iteration-based label (1xK)} \\
	\PyComment{m: momentum} \\
	\PyComment{t: temperature of contrastive learning} \\
	\PyComment{t\_weight: temperature of iteration-based weight (0.01)} \\
	\PyCode{f\_k.params = f\_q.params} \PyComment{initialize}\\
	\PyCode{for x in loader:} \PyComment{load a minibatch x with N samples}\\
	\Indp   % start indent
		\PyCode{q = aug(x)} \PyComment{queries: NxC}\\
		\PyCode{k = aug(x)} \PyComment{keys: NxC}\\
		\PyCode{k = k.detach()} \PyComment{no gradient to keys}\\
		\PyCode{} \\
		\PyComment{positive logits: Nx1}\\
		\PyCode{l\_pos = torch.einsum('nc,nc->n', [q, k]).unsqueeze(-1)}\\
		\PyComment{negative logits: NxK}\\
		\PyCode{l\_neg = torch.einsum('nc,ck->nk', [q, queue.clone().detach()])}\\
		\PyCode{} \\
		\PyComment{iteration-based weight for negative samples}\\
		\PyCode{iteration\_weight=queue\_label.clone().detach()}\\
		\PyCode{iteration\_weight=(iteration\_weight-min(iteration\_weight))\\/(max(iteration\_weight)-min(iteration\_weight))}\\
		\PyCode{iteration\_weight=torch.normalize(iteration\_weight,p=2,dim=0)}\PyComment{normalize}\\
		\PyCode{iteration\_weight=F.softmax(iteration\_weight/t\_weight,dim=0)}\\
		\PyCode{} \\
		\PyComment{iteration-weighted negative logits}\\
		\PyCode{l\_neg=l\_neg+iteration\_weight}\\
		\PyCode{} \\
		\PyComment{logits: Nx(1+K)}\\
		\PyCode{logits = cat([l\_pos, l\_neg], dim=1)}\\
		\PyCode{} \\
		\PyComment{contrastive loss}\\
		\PyCode{labels = zeros(N) }\PyComment{positives are the 0-th}\\
		\PyCode{loss = CrossEntropyLoss(logits/t, labels)} \\
		\PyCode{} \\
		\PyComment{Adam update: query network}\\
		\PyCode{loss.backward()}\\
		\PyCode{update(f\_q.params)}\\
		\PyCode{} \\
		\PyComment{momentum update: key network}\\
		\PyCode{f\_k.params = m*f\_k.params+(1-m)*f\_q.params}\\
		\PyCode{} \\
		\PyComment{update dictionary}\\
		\PyCode{enqueue(queue,queue\_label,k)} \PyComment{enqueue the current minibatch}\\
		\PyCode{dequeue(queue,queue\_label)}\PyComment{dequeue the earliest minibatch}\\
	
	\caption{Pseudocode of iteration-based contrastive learning in a PyTorch-like style.}
	\label{algo:time-based}
\end{algorithm}
\subsection{Generated Images on FFHQ}
Fig. \ref{FIG:generated_imgaes_ffhq} illustrates some randomly synthesized images on FFHQ dataset.
\begin{figure*}
%%\vspace{-1.0em}
\setlength{\abovecaptionskip}{0.1cm}
    \setlength{\belowcaptionskip}{-0.3cm}
	\centering
	\includegraphics[scale=0.28]{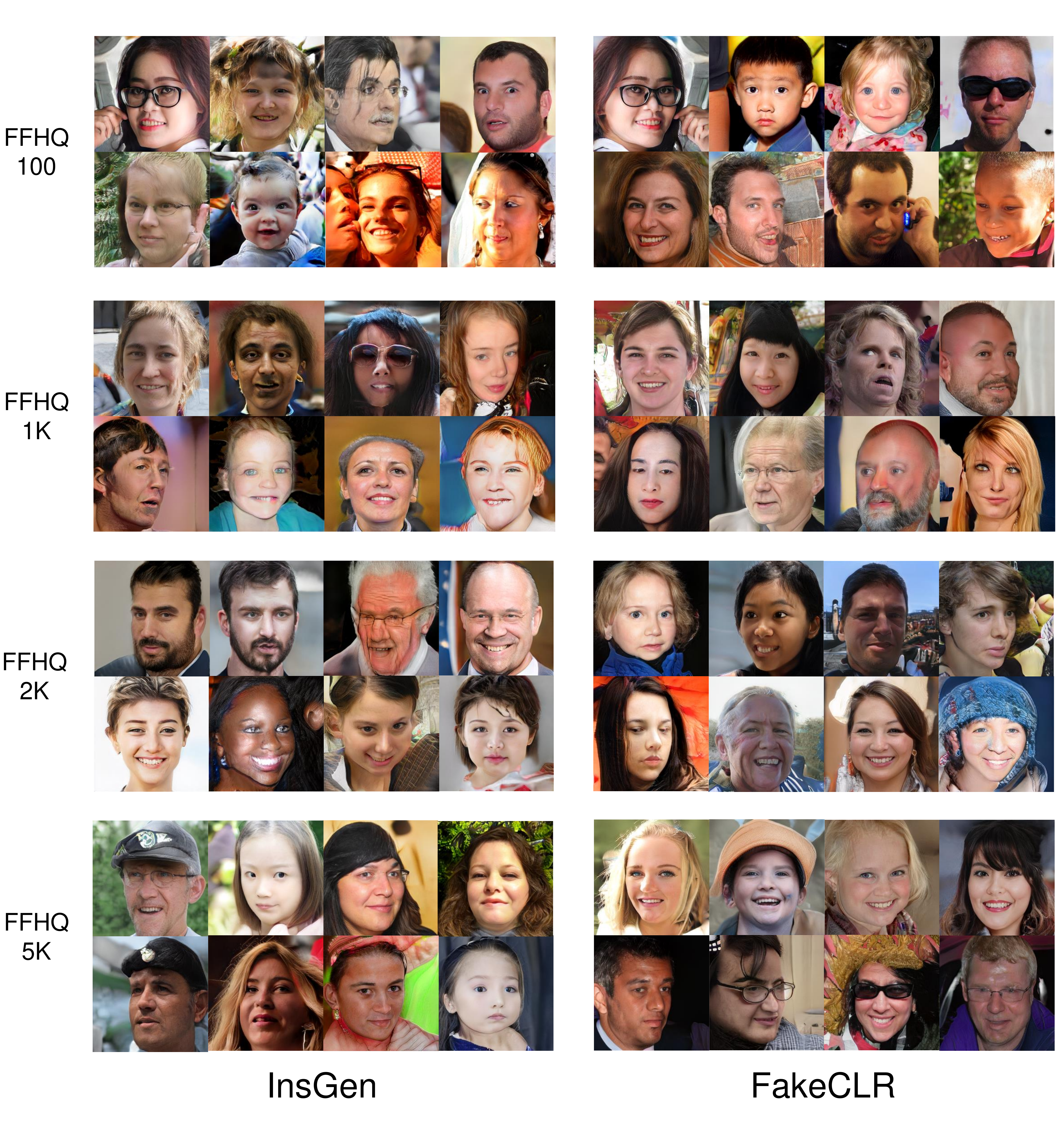}
	\caption{Generated images with different number of training images on FFHQ dataset. All images are synthesized randomly without truncation.}
	\label{FIG:generated_imgaes_ffhq}
	%%\vspace{-1.0em}
\end{figure*}

\subsection{Analysis of Computation Cost}
Although three strategies have been adopted in our proposed FakeCLR, they are all lightweight and only a fraction of the cost has been introduced. The cost of time on FFHQ (256$\times$256) datasets have been demonstrated in Fig. \ref{Tab:computation cost}. 

\begin{table*}
	\caption{The cost of time on FFHQ (256$\times$256) datasets. All results are calculated by averaged over 10 times on two NVIDIA Tesla A100 GPUs with 64 batch size.
		\label{Tab:computation cost}}
	\centering
	\begin{tabular}{cc}
		\toprule
		256$\times$256 Resolution$\quad$& sec/kimg\\
		\midrule
		StyleGAN2-ADA&6.16\\
		StyleGAN2-ADA-Linear&6.15\\
		Instance-real&8.18\\
		Instance-fake&10.06\\
		InsGen&11.70\\
		FakeCLR&10.67\\
		%Instance-fake-no\_cl\_on\_g&\textbf{43.28}& \textbf{17.21}& \textbf{11.18}& 8\\
		\bottomrule
	\end{tabular}
	%%\vspace{-2.0em}
\end{table*}

\subsection{Nearest Neighbor Test on FFHQ-100 and Obama Datasets}
We show the nearest neighbor test in pixel and LPIPS spaces on FFHQ-2K and Obama datasets in Fig. \ref{fig:nearest neighbors_obama} and Fig. \ref{fig:nearest neighbors_FFHQ-100}, respectively. The results indicates that our FakeCLR has more diverse generated images and does not simply memorize the training images even give small datasets.

\begin{figure}[t]
	\begin{center}
		\includegraphics[width=1\linewidth]{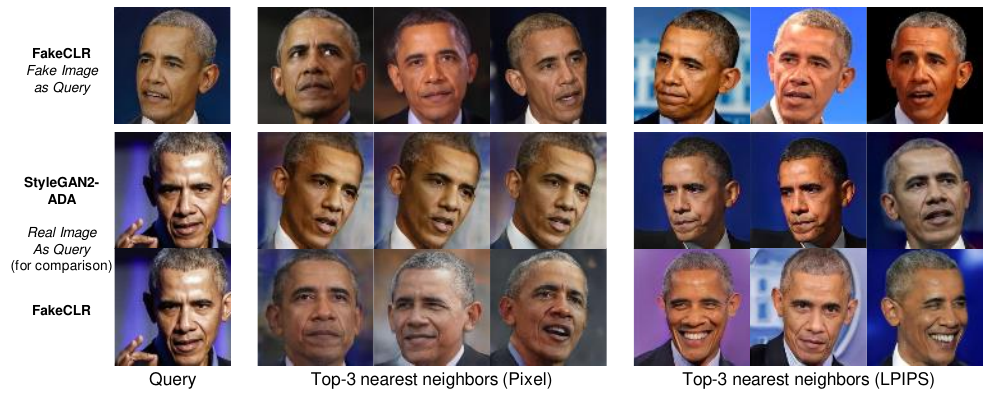}
	\end{center}
	\caption{Nearest neighbors in pixel space (left) and LPIPS feature space (right) on the Obama dataset. Followed as DiffAugment \cite{zhao2020differentiable}, we select fake image as query in the first row. Furthermore, we select the real image as new query for comparing with previous methods in the bottom two rows.}
	\label{fig:nearest neighbors_obama}
\end{figure}
\begin{figure}[t]
	\begin{center}
		\includegraphics[width=1\linewidth]{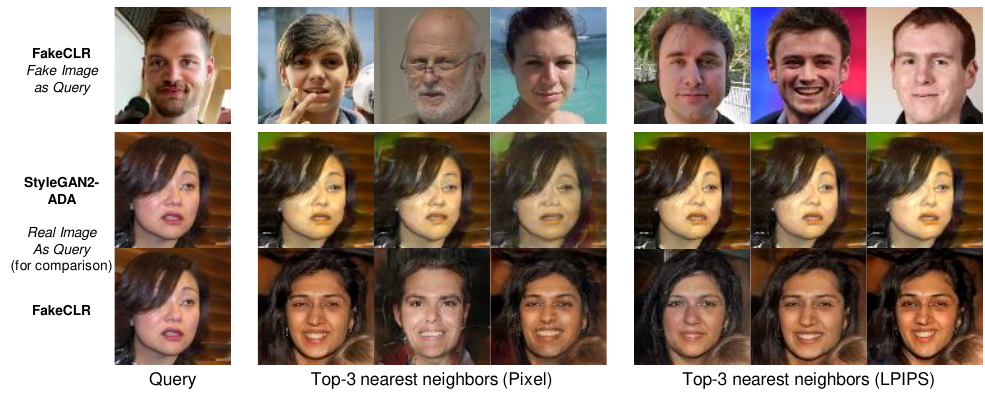}
	\end{center}
	\caption{Nearest neighbors in pixel space (left) and LPIPS feature space (right) on the FFHQ-100 dataset. Followed as DiffAugment \cite{zhao2020differentiable}, we select fake image as query in the first row. Furthermore, we select the real image as new query for comparing with previous methods in the bottom two rows.}
	\label{fig:nearest neighbors_FFHQ-100}
\end{figure}
\subsection{Comparisons on PPL and KID Metrics on FFHQ-100 and Obama Datasets}
We report the PPL and KID metrics on FFHQ-100 and Obama Datasets in Table \ref{fig:KID}. Both of them demonstrate the superiority of FakeCLR.

\begin{table}[h]
    \centering 
    \caption{{Comparison with previous methods over FFHQ-2K and Obama datasets: KID and PPL of $w$ space are reported. Results with $^*$ are the searched best results for each dataset, which are better than InsGen with the default setting.} }
    \begin{tabular}{c|c c|c c}
    \hline
    \multirow{2}{*}{Methods} & \multicolumn{2}{c|}{KID($\times 10^{-2}$)$\downarrow$} 
    & \multicolumn{2}{c}{PPL($w$)$\downarrow$} \\
    \cline{2-5}
   &Obama&FFHQ-2K &Obama&FFHQ-2K\\
    \hline
    ADA[10]&1.29&0.466&843&232\\
    APA[18]&1.12&0.562&836&220\\
    InsGen$^*$[8]&0.605&0.455&818&206\\
    FakeCLR (our)&\textbf{0.296}&\textbf{0.345}&\textbf{752}&\textbf{136}\\
    \hline
    \end{tabular}
    \label{fig:KID}
    \end{table}
\subsection{Generated Images on Few-shot Generation}
Fig. \ref{FIG:generated_imgaes_few} illustrates some randomly synthesized images on Obama (100), Grumpy Cat (100), Panda (100), AnimalFace Cat (160), and AnimalFace Dog (389) datasets.
\begin{figure*}
%%\vspace{-1.0em}
\setlength{\abovecaptionskip}{0.1cm}
    \setlength{\belowcaptionskip}{-0.3cm}
	\centering
	\includegraphics[scale=0.26]{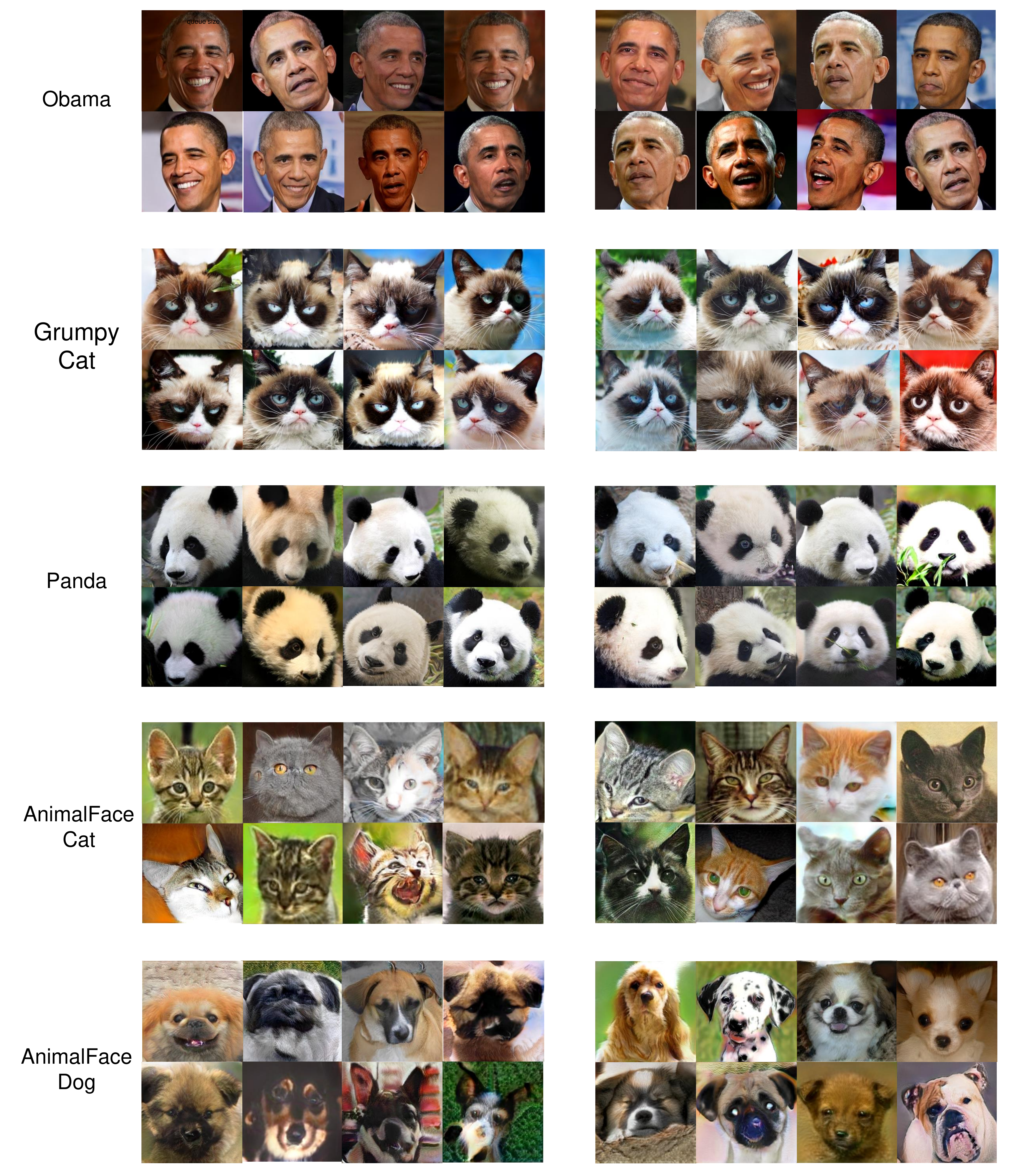}
	\caption{Generated images with different number of training images on few-shot dataset. All images are synthesized randomly without truncation.}
	\label{FIG:generated_imgaes_few}
	%%\vspace{-1.0em}
\end{figure*}
\subsection{Experiments Considering Path Length Regularization}
Path length regularization in StyleGAN2 is another way to promote latent continuity. In this section, we consider the strength of Path length regularization in the FakeCLR. As shown in Table \ref{tab:pl_weight}, path length regularization boosts latent continuity, yet decreases the FID a little bit. Our FakeCLR show consistent improvement under different regularization settings.
\begin{table}[h]
\caption{FID and PPL on FFHQ-2K for different strength of PL regularization (All other experiments follow the default parameter PL weight=2). $^\dagger$ indicates that we adopt the fixed queue size = 200 for a fair comparison.}
    \centering 
\begin{tabular}{c|ccc|ccc} 
\hline  
\multirow{2}{*}{Methods}& \multicolumn{3}{c|}{FID}&\multicolumn{3}{c}{PPL ($w$)}\\
 \cline{2-7}
&2&5&10&2&5&10\\
\hline  
\ \ \ InsGen [8] \ \ \ &12.11&13.87&12.29&199&183&180\\
FakeCLR$^\dagger$&\ {\bf 10.40} \ &\ {\bf 10.78}\ &\ {\bf 11.32}\ &\ {\bf 171}\ &\ {\bf 160}\ &\ {\bf 145}\ \\
\hline
\end{tabular} 
 \label{tab:pl_weight}
\end{table}
\subsection{Experiments on other architectures and settings}
As shown in Table \ref{tab:cifar10}, we conducted experiments on CIFAR-10 (using full and $20\%$ training data) under the unconditional and conditional settings and StyleGAN2-ADA and BigGAN backbones. From both architectures and settings aspects, our FakeCLR can
improve the generation performance.
\begin{table}[h]
    \centering 
 \caption{{Comparisons of different architectures and settings (CIFAR-10 dataset). }}
    \setlength\tabcolsep{2pt}
    \begin{tabular}{c|c c|c c}
    \hline
    \multirow{2}{*}{Methods} & \multicolumn{2}{c|}{Conditional} 
    & \multicolumn{2}{c}{Unconditional} \\
    \cline{2-5}
    &$20\%$&full&$20\%$&full\\
    \cline{1-5}
    %ADA&4.42&2.42&5.03&2.92\\
    \ \ \ BigGAN-APA [18] \ \ \ &15.3&8.28&-&-\\
    BigGAN-FakeCLR&13.6&7.62&-&-\\
    %InsGen&5.3&2.24&4.83&2.7\\
    StyleGAN-InsGen [8]&5.3&2.24&4.83&2.7\\
    StyleGAN-FakeCLR&\textbf{4.28}&\textbf{2.13}&\textbf{4.6}&\textbf{2.32}\\
    \hline
    \end{tabular}
 \label{tab:cifar10}
\end{table}
\begin{table}[h]
    \centering 
\caption{{FID on adding real samples into fake queue over FFHQ-2K dataset.}}
\begin{tabular}{c|c} 
\hline 
Methods&FID\\
\hline
Instance-perturbation (baseline) & \ \ \ \ \ \ \ {\bf 11.29} \ \ \ \ \ \ \ \\
Config A &12.46\\
Config B&13.71\\
\hline  
\end{tabular} 
 \label{tab:queue}
\end{table}
\subsection{What if adding real samples into the fake queue?}
Table \ref{tab:queue} presents some explorations of adding real samples into the fake queue of FakeCLR? \textit{Instance-perturbation} is selected as the baseline for a fair comparison. In Config A, we add real images from the beginning of training, 50$\%$ real+50$\%$ fake samples form the queue. Similarly, we add real images from the half iterations of training (Config B). 
%\textbf{Config A:} fake queue (from beginning): real samples ($50\%$)+fake samples ($50\%$). \textbf{Config B:} fake queue (from half iterations of the training): real samples ($50\%$)+fake samples ($50\%$). 
Experiments demonstrate that adding real samples into our queue is not conducive to the final results. The reasons could be: i) Our FakeCLR aims to solve latent space discontinuity, while adding real samples may have side effects on learning the correlation among fake samples. ii) In terms of contrastive learning, real samples can be regarded as easy negative samples for fake samples, but contrastive learning usually prefers hard negative samples.
\end{document}